\crefname{property}{property}{properties}
\crefname{figure}{Fig.}{Figs.}
\crefname{equation}{Eq.}{Eqs.}
\crefname{table}{Tab.}{Tabs.}
\crefname{section}{Sec.}{Secs.}
  \providecommand\BibTeX{{%
    \normalfont B\kern-0.5em{\scshape i\kern-0.25em b}\kern-0.8em\TeX}}}
\begin{document}

\newcommand{\POmega}{\ensuremath{\Sigma}}
\newcommand{\PM}{\ensuremath{\mathrm{M}}}
\newcommand{\cstart}{\ensuremath{\mathbf{s}}}
\newcommand{\cgoal}{\ensuremath{\mathbf{g}}}
\newcommand{\cfst}{\ensuremath{\mathbf{c_i}}}
\newcommand{\csec}{\ensuremath{\mathbf{c_k}}}

\definecolor{mydarkblue}{rgb}{0,0.08,1}
\definecolor{mydarkgreen}{rgb}{0.02,0.6,0.02}
\definecolor{mydarkred}{rgb}{0.8,0.02,0.02}
\definecolor{mydarkorange}{rgb}{0.40,0.2,0.02}
\definecolor{mypurple}{RGB}{111,0,255}
\definecolor{myred}{rgb}{1.0,0.0,0.0}
\definecolor{mygold}{rgb}{0.75,0.6,0.12}
\definecolor{mydarkgray}{rgb}{0.66, 0.66, 0.66}
\definecolor{Brown2}{RGB}{238,59,59}
\definecolor{url_color}{RGB}{42, 83, 163}

\newtheorem{property}[theorem]{Property}

\newcommand{\pcplanner}{PC-Planner\xspace}
\newcommand{\ourmidrulewidth}{0.11em}
\newcommand{\metricsize}{4}


\definecolor{colorFst}{RGB}{191,225,202}       %
\definecolor{colorSnd}{RGB}{227,237,186}     %

\newcommand{\fs}{\cellcolor{colorFst}\bf}   %
\newcommand{\nd}{\cellcolor{colorSnd}}      %

\makeatletter
\DeclareRobustCommand\onedot{\futurelet\@let@token\@onedot}
\def\@onedot{\ifx\@let@token.\else.\null\fi\xspace}

\def\eg{e.g\onedot} 
\def\Eg{E.g\onedot}
\def\ie{i.e\onedot} 
\def\Ie{I.e\onedot}
\def\cf{c.f\onedot} 
\def\Cf{C.f\onedot}
\def\etc{etc\onedot} 
\def\vs{vs\onedot}
\def\wrt{w.r.t\onedot} 
\def\dof{d.o.f\onedot}
\def\etal{et al\onedot}
\makeatother

\title[\pcplanner]{\pcplanner: Physics-Constrained Self-Supervised Learning for Robust Neural Motion Planning with Shape-Aware Distance Function}


\author{Xujie Shen}
\authornote{Xujie Shen and Haocheng Peng contributed equally to this work.}
\email{shenfishcrap@gmail.com}
\affiliation{%
  \institution{Zhejiang University}
  \city{Hangzhou}
  \country{China}
}
\author{Haocheng Peng}
\authornotemark[1]
\email{hchaocheng0223@gmail.com}
\affiliation{%
  \institution{Zhejiang University}
  \city{Hangzhou}
  \country{China}
}

\author{Zesong Yang}
\email{zesongyang0@zju.edu.cn}
\affiliation{%
  \institution{Zhejiang University}
  \city{Hangzhou}
  \country{China}
}

\author{Juzhan Xu}
\email{juzhan.xu@gmail.com}
\affiliation{%
  \institution{Shenzhen University}
  \city{Shenzhen}
  \country{China}
}

\author{Hujun Bao}
\email{bao@cad.zju.edu.cn}
\affiliation{%
 \institution{Zhejiang University}
 \city{Hangzhou}
 \country{China}
}

\author{Ruizhen Hu}
\email{ruizhen.hu@gmail.com}
\affiliation{%
  \institution{Shenzhen University}
  \city{Shenzhen}
  \country{China}
}

\author{Zhaopeng Cui}
\authornote{Corresponding author.}
\email{zhpcui@gmail.com}
\affiliation{%
 \institution{Zhejiang University}
 \city{Hangzhou}
 \country{China}
}

\renewcommand{\shortauthors}{Shen et al.}

\begin{abstract}
  Motion Planning (MP) is a critical challenge in robotics, especially pertinent with the burgeoning interest in embodied artificial intelligence. Traditional MP methods often struggle with high-dimensional complexities. Recently neural motion planners, particularly physics-informed neural planners based on the Eikonal equation, have been proposed to overcome the curse of dimensionality. However, these methods perform poorly in complex scenarios with shaped robots due to multiple solutions inherent in the Eikonal equation.
  To address these issues, this paper presents \pcplanner, a novel physics-constrained self-supervised learning framework for robot motion planning with various shapes in complex environments. To this end, we propose several physical constraints, including monotonic and optimal constraints, to stabilize the training process of the neural network with the Eikonal equation. 
  Additionally, we introduce a novel shape-aware distance field that considers the robot's shape for efficient collision checking and Ground Truth (GT) speed computation. This field reduces the computational intensity, and facilitates adaptive motion planning at test time.
  Experiments in diverse scenarios with different robots demonstrate the superiority of the proposed method in efficiency and robustness for robot motion planning, particularly in complex environments.
  Code and data are available on the project webpage:
  \urlstyle{tt}
  \textcolor{url_color}{\url{https://zju3dv.github.io/pc-planner}}.
\end{abstract}

\begin{CCSXML}
<ccs2012>
   <concept>
       <concept_id>10010147.10010178.10010199.10010204</concept_id>
       <concept_desc>Computing methodologies~Robotic planning</concept_desc>
       <concept_significance>500</concept_significance>
       </concept>
   <concept>
       <concept_id>10010147.10010178.10010213.10010215</concept_id>
       <concept_desc>Computing methodologies~Motion path planning</concept_desc>
       <concept_significance>500</concept_significance>
       </concept>
 </ccs2012>
\end{CCSXML}

\ccsdesc[500]{Computing methodologies~Robotic planning}
\ccsdesc[500]{Computing methodologies~Motion path planning}

\keywords{motion planning, robot navigation, Eikonal equation, self-supervised learning}

\begin{teaserfigure}
  \centering
  \includegraphics[width=\textwidth]{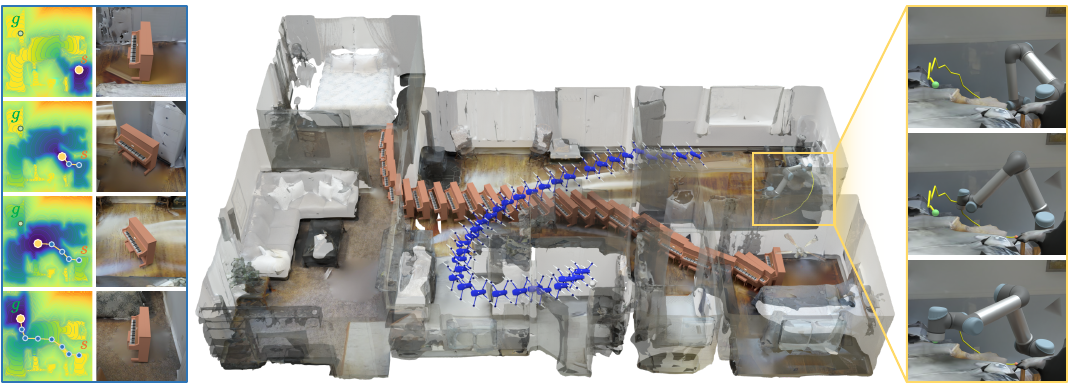}
  \caption{Given a prebuilt complex 3D environment, our \pcplanner can learn the time fields and execute motion planning for robots of various shapes from any start state to any goal state in a self-supervised manner. Left:  Time fields for a piano navigating through the environment, in which the rainbow color scheme is used. Right:  Close-up views of the manipulation of a UR5 robot.
  }
  \Description{robots navigates in 3D environments}
  \label{fig:teaser}
\end{teaserfigure}


\maketitle

\section{Introduction}

Motion planning (MP) is a long-standing problem in robotics that aims to find a trajectory from a start configuration to a goal configuration while satisfying all constraints like collision avoidance. 
With the rapid advancement of embodied artificial intelligence, this problem has garnered increasing attention.
The traditional MP methods normally adopt sampling techniques to explore the robot's obstacle-free state-space and construct feasible paths \cite{kingston2018sampling,karaman2011sampling,gammell2015batch}. However, these methods often encounter limitations in high-dimensional spaces with 
increased computational complexity and reduced efficiency.

Recently learning-based methods \cite{Neural_RRT,li2021learning,chaplot2021differentiable}, \ie, neural motion planners, have been proposed to solve the curse of dimensionality. Most of the learning-based methods \cite{huh2021cost,li2021learning} utilize the deep neural network to predict the motions iteratively to improve the efficiency of planning in high dimensions. However, these methods face some significant limitations. Firstly, these data-driven methods heavily rely on expert training data such as the trajectories from the traditional methods, resulting in time-consuming data generation processes, particularly for high-dimensional spaces. Additionally, they typically employ constant velocity paradigms which are not suitable for real scenarios~\cite{vysocky2019motion}. 
In contrast, some recently proposed physics-informed methods \cite{ni2023ntfields,ni2023progressive} offer a compelling alternative. 
These methods first predefine a speed field that considers velocity constraints based on the geometry of obstacles. Utilizing this predefined speed field, they employ neural networks to solve the Eikonal equation for motion planning. The training data can be efficiently generated by sampling within the speed field, thereby circumventing the need for expert data and significantly reducing data generation time.
However, these physics-informed methods struggle in complex environments with shaped robots due to the multiple solutions inherent in the Eikonal equation.
Consequently, they may produce time fields with local minima, as shown in \cref{fig:time_field} (left), leading to infeasible paths and thus poor performance in complex planning tasks.
Unlike some optimization-based methods, where ``local minima'' refers to feasible but sub-optimal solutions, in our context, local minima in the time fields lead to infeasible solutions, thereby reducing the success rates.

In this paper, we present \pcplanner, a novel physics-constrained self-supervised learning framework for neural robot motion planning based on a new shape-aware distance field, effectively addressing the local minima in the time fields and thus significantly outperforming existing physics-informed methods in success rates.

Based on a deep analysis of the properties of the Eikonal equation, we propose two physical constraints and incorporate them into the training process of the neural network with the Eikonal equation for motion planning in a self-supervised manner.
Specifically, we introduce the \emph{monotonic constraint} to identify the local minima and the \emph{optimal constraint} to preserve the optimality of the solution to the Eikonal equation. These constraints are seamlessly integrated into a self-supervised training framework, allowing the network to recognize situations where the solution is trapped in local minima and enabling it to self-correct by adhering to the defined physical rules, as shown in ~\cref{fig:time_field} (right).

However, these physical constraints are only applied to the collision-free paths, which introduces significant computational overhead for trajectory collision checks during training.
To address this issue, we further propose a novel shape-aware distance field (SADF) that models the minimum distance from the robot with any shape and configuration to the environment. With this neural implicit field, we can efficiently conduct collision checking and apply the physical constraints during the training process of the physics-informed neural model. Meanwhile, our SADF can be efficiently converted into the required speed field for training the physics-informed model, and further exploited during the test stage for collision avoidance with adaptive path planning, 

We analyze and validate the effectiveness of our method through experiments in diverse scenarios with various robots, demonstrating the superiority of \pcplanner over the existing methods.

\begin{figure}[!tb]
    \centering
    \includegraphics[width=\linewidth]{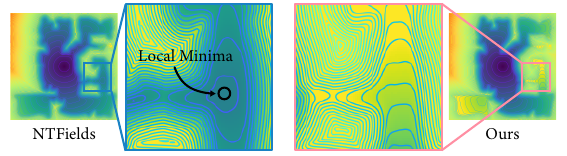}
    \caption{Comparison of time fields for Gibson. NTFields generates an incorrect time field with local minima due to the inherent multiple solutions in the Eikonal equation, while our \pcplanner learns to generate the correct time field with the proposed physical constraints.
    }
    \label{fig:time_field}
    \Description{Time fields comparison.}
\end{figure}

Our main contributions can be summarized as follows:
\begin{itemize}
    \item We introduce a novel physics-constrained self-supervised learning approach for physics-informed neural robot motion planning, which enables efficient and robust motion planning for robots with various shapes in complex scenarios.
    \item We propose two physical constraints to enable the network to jump out of local minima and converge to the correct solutions that obey the physical rules.
    \item We develop a new neural shape-aware distance field for collision checking that can predict the minimum distance to the environment for any robot with arbitrary shapes and configurations in the fixed environment, which facilitates both self-supervised training and test stages.
\end{itemize}

\section{Related Work}
Our work focuses on the problem of robust motion planning for robots with arbitrary shapes in challenging scenarios. Here, we review the current state of research on motion planning and implicit neural distance fields.

\noindent\textbf{Motion Planning.}
Existing optimal path planning methods include sampling-based approaches\cite{lavalle2001randomized,karaman2011sampling,gammell2015batch,tukan2022obstacle} that explore the environment through random state sampling to retrieve feasible paths, optimization-based methods\cite{kalakrishnan2011stomp,kurenkov2022nfomp,mukadam2016gaussian} which minimize a defined cost while meeting constraints to find the optimal trajectory, \etc\cite{yang2019survey}. While effective for high-dimensional tasks, they suffer from high computational costs and unstable solutions due to their sensitivity to initial conditions.
Neural motion planners (NMPs)\cite{Neural_RRT,LSDtransformer,chaplot2021differentiable,ichter2018learning,li2021learning} have emerged to balance efficiency and stability, using expert trajectories from classic methods for training. However, generating training data from traditional methods is computationally expensive, limiting their flexibility.

The most pertinent method to our work is the physics-driven method.
The earliest of these methods is FMM\cite{chopp2001some,treister2016fast,sethian1996fast,pykonal_fmm} which numerically solves the Eikonal equation.
However, its computational complexity increases dramatically with dimensionality.
Recently, NTFields\cite{ni2023ntfields} has introduced a continuous-time path planning method encoding the Eikonal equation directly into the network, eliminating the need for expert trajectory supervision. Subsequently, \cite{ni2023progressive} incorporates a viscosity term and progressive learning for smoother path solutions. Despite their success, these methods struggle with challenging scenarios affected by local minima. Our approach addresses the issues, ensuring high success rates and fast inference across diverse scenarios including high-dimensional planning, and various shapes of robots.

\noindent\textbf{Implicit distance representation.}
The contemporary approach \cite{chabra2020deep,ouasfi2022few,jiang2020local} to learning Signed Distance Fields (SDF) primarily involves employing neural networks to regress the mapping between 3D coordinates to the signed distances. Similar work can be traced back to \cite{park2019deepsdf}, where a neural signed distance function was introduced. The function allows querying the shortest distance between the object surface and continuous spatial points. However, it is designed for point queries, and when dealing with a shaped robot as the target, such distance functions cannot be directly applied. 
Recent works like \cite{chou2022gensdf,zhu2023e2pn,chen2021equivariant} excel in accurately representing the shape of objects, encompassing the capability to handle key properties such as scaling and rotation.
These methods serve as inspiration for our proposed shape-aware distance function, allowing us to characterize the distance between the robot and the environment.
\begin{figure*}
    \centering
    \includegraphics[width=\linewidth]{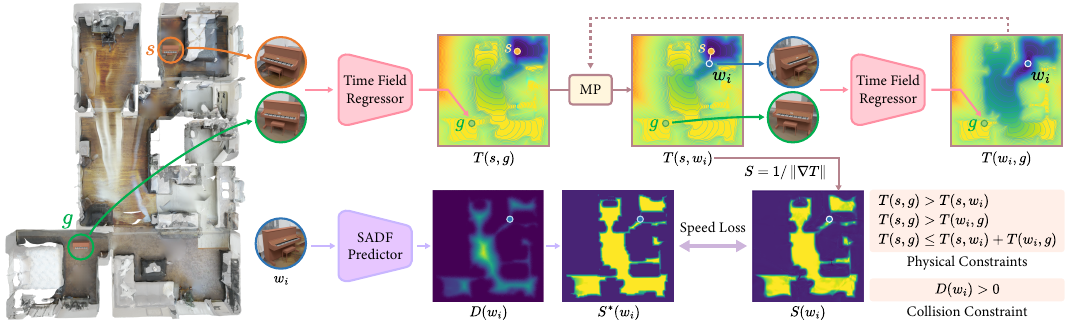}
    \caption{    
    The \pcplanner integrates a physics-constrained self-supervised learning framework with a shape-aware distance field. The start configuration $\cstart$ and goal configuration $\cgoal$ are utilized to predict the time $T(\cstart, \cgoal)$ through the time field regressor.
    The travel times $T(\cstart, \cgoal)$, $T(\cstart, \mathbf{w_i})$, and $T(\mathbf{w_i}, \cgoal)$ are employed to incorporate physical constraints during the training of the time field in a self-supervised manner to reduce local minima. 
    It is essential that the waypoint $\mathbf{w_i}$ remains collision-free, which can be ensured by distance $D(\mathbf{w_i})$ predicted through SADF.
    Moreover, $D(\mathbf{w_i})$ can also be converted into the ground truth speed $S^*(\mathbf{w_i})$ of $\mathbf{w_i}$ to compute the speed loss with the predicted speed $S(\mathbf{w_i})$, which is determined using the time $T(\cstart, \mathbf{w_i})$ and \cref{eik_eq}.
    During training, the motion planning (MP) iterates to derive the waypoint for physical constraint loss, while during testing, it iteratively computes waypoints to generate a path solution.
    }
    \label{fig:pipeline}
    \Description{Pipeline.}
\end{figure*}

\section{Preliminaries}\label{sec:NTFields}
\subsection{Robot Motion Planning}
Let $\mathcal{C}\subset \mathbb{R}^d$ and  $\mathcal{X} \subset \mathbb{R}^m$ represent the configuration space (c-space) of the robot and its surrounding environment, where $d, m\in \mathbb{N}$ denote the dimensions.
The obstructed c-space, formed by the obstacles in the environment $\mathcal{X}_{obs} \subset \mathcal{X}$, is denoted as $\mathcal{C}_{obs}$.
Additionally, the feasible space in c-space and the environment is represented as $\mathcal{C}_{free} = \mathcal{C}\backslash \mathcal{C}_{obs}$ and $\mathcal{X}_{free} = \mathcal{X}\backslash \mathcal{X}_{obs}$ respectively.
Given the robot start $\cstart \in \mathcal{C}_{free}$ and goal $\cgoal \in \mathcal{C}_{free}$ configurations, the objective of robot motion planning algorithms is to find a collision-free trajectory $\sigma = \{\mathbf{c_0}, ...,  \mathbf{c_i}, ..., \mathbf{c_n}\} \subset \mathcal{C}_{free}\  s.t.\  \mathbf{c_i} \in \mathcal{C}_{free}$, where $\mathbf{c_0} = \cstart$, $\mathbf{c_n} = \cgoal$, and $\mathbf{c_i}$ denotes the waypoint along the trajectory.

\subsection{NTFields}
The recent work NTFields \cite{ni2023ntfields} introduces a new perspective that relates motion planning problems with the solution to the Eikonal equation. The Eikonal equation defines the wave propagation with a first-order, nonlinear PDE formulation:
\begin{equation}
    S(\cgoal)^{-1} = \left \|\nabla_{\cgoal} T(\cstart,\cgoal)\right \|, \label{eik_eq}
\end{equation}
where $\cstart$ and $\cgoal$ are start and goal configurations, $T(\cstart,\cgoal)$ represents the arrival (travel) time from $\cstart$ to $\cgoal$, and $\nabla_\cgoal T(\cstart,\cgoal)$ denotes the partial derivative of arrival time with respect to the goal point. The solution to the equation yields the minimum arrival time of the wave from the start point to the goal point under the predefined speed model.

NTFields employs the multilayer perceptron (MLP) to model the time field $T$, which is the solution to the Eikonal equation. The neural network, parameterized by $\Theta$, takes the robot's start and goal configuration $(\cstart, \cgoal)$ as input and outputs the time field, namely, the Time Field Regressor:
\begin{equation}
    T_{\Theta}(\cstart,\cgoal) =\text{MLP}(\cstart,\cgoal;\Theta).
\end{equation}

The ground truth speed model is defined as:
\begin{equation}
S^{\ast }(\mathbf{c})=\frac{ S_{const}}{d_{max}}\times clip\left(D_{[r, \mathcal{X}_{obs}]}\left(\mathbf{c} \right),d_{min},d_{max} \right)\label{speedmodel},
\end{equation}
where $r$ denotes the robot and $D$ is a function that computes the shortest distance between the robot at configuration $\mathbf{c}\in \mathcal{C}$ and spatial obstacles $\mathcal{X}_{obs}$. This distance is obtained by sampling points on the surface of the robot and calculating the minimum distance from these points to the obstacles (implemented using BVH~\cite{Karras2012bvh}). The $clip$ function truncates the distance within the range of the minimum distance, $d_{min}$, and the maximum distance, $d_{max}$.
$S_{const}$ is a speed hyper-parameter, signifying that if the distance between the robot surface and the obstacle surpasses $d_{max}$, a maximum speed upper limit is imposed.
By providing the GT speed model which implicitly encodes obstacle information, NTFields can be trained with these GT speed and predicted speed which is derived from the predicted time according to \cref{eik_eq}.
However, the BVH-based speed model is computationally expensive, especially when the robot has a complex shape. 
Our SADF (detailed in \cref{sec:sadf}) offers an accelerated approach to calculating the speed field.

\subsection{Viscosity Solution of Eikonal Equation}

The non-uniqueness of the solution for the Eikonal equation can cause local minima when applied to motion planning as mentioned in \cite{sethian1999fmm}.
To solve this problem, the viscosity solution of the Eikonal equation is normally utilized with the following definition:
\begin{definition}
    \textit{The function $T$ is said to be the viscosity solution of the Eikonal equation provided that for all smooth test function $v$},

    \textit{1. if $T - v$ has a local maximum at a $\boldsymbol{x_0}$, then $G(\boldsymbol{x_0}, \nabla v(\boldsymbol{x_0})) \le 0$}

    \textit{2. if $T - v$ has a local minimum at a $\boldsymbol{x_0}$, then $G(\boldsymbol{x_0}, \nabla v(\boldsymbol{x_0})) \ge 0$}
\end{definition}
\noindent where $G(\mathbf{x}, \nabla T)$ is the general static Hamilton-Jacobi equation for the  Eikonal equation:
\begin{equation}
    G(\mathbf{x}, \nabla T) = 0.
\end{equation}

As proved in \cite{sethian1999fmm, crandall1983viscosity}, we have the following theorem:

\begin{theorem}[Existence and uniqueness]
    \label{thm:unique}
    There exists a unique viscosity solution $T$ of the Eikonal equation.
\end{theorem}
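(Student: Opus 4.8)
The plan is to treat the statement as an instance of the classical Crandall--Lions well-posedness theory for static Hamilton--Jacobi equations, specialized to the Eikonal Hamiltonian. First I would fix the precise problem: writing $G(\mathbf{x},\nabla T) = \|\nabla T\| - S(\mathbf{x})^{-1}$, the equation is the Dirichlet problem $G(\mathbf{x},\nabla_{\mathbf{x}} T(\cstart,\mathbf{x})) = 0$ on $\mathcal{C}_{free}\setminus\{\cstart\}$ together with $T(\cstart,\cstart)=0$ (and, if the domain is bounded, a state-constraint or $+\infty$ condition on $\partial\mathcal{C}_{free}$). The structural facts I would record are that $S$ is continuous and bounded between two positive constants --- this holds for the clipped speed model of \cref{speedmodel}, whose $clip$ keeps $S^{\ast}\in[S_{const}d_{min}/d_{max},\,S_{const}]$ --- so $G$ is continuous, \emph{coercive} in the gradient variable ($G(\mathbf{x},p)\to+\infty$ as $\|p\|\to\infty$, uniformly in $\mathbf{x}$), and satisfies the standard structure estimate $|G(\mathbf{x},p)-G(\mathbf{y},p)|\le \omega(|\mathbf{x}-\mathbf{y}|)$ with a modulus $\omega$ independent of $p$ (the $\mathbf{x}$-dependence of $G$ sits only in $S^{-1}$, which inherits a modulus of continuity from $S$). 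These are exactly the hypotheses under which comparison and Perron's method apply.

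For uniqueness I would prove a comparison principle: if $u$ is an upper semicontinuous viscosity subsolution and $v$ a lower semicontinuous supersolution with $u\le v$ on the boundary data (at $\cstart$ and on $\partial\mathcal{C}_{free}$), then $u\le v$ throughout. The tool is the doubling-of-variables argument: for $\alpha>0$ maximize $\Phi_\alpha(\mathbf{x},\mathbf{y}) = u(\mathbf{x}) - v(\mathbf{y}) - \tfrac{\alpha}{2}\|\mathbf{x}-\mathbf{y}\|^2$ over the compact domain, obtaining a maximizer $(\mathbf{x}_\alpha,\mathbf{y}_\alpha)$ with $\alpha\|\mathbf{x}_\alpha-\mathbf{y}_\alpha\|^2\to 0$. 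At an interior maximizer, $\mathbf{x}\mapsto\tfrac{\alpha}{2}\|\mathbf{x}-\mathbf{y}_\alpha\|^2$ is an admissible test function touching $u$ from above and $\mathbf{y}\mapsto -\tfrac{\alpha}{2}\|\mathbf{x}_\alpha-\mathbf{y}\|^2$ touches $v$ from below, so with $p_\alpha = \alpha(\mathbf{x}_\alpha-\mathbf{y}_\alpha)$ the sub/supersolution inequalities give $G(\mathbf{x}_\alpha,p_\alpha)\le 0\le G(\mathbf{y}_\alpha,p_\alpha)$; subtracting and invoking the structure estimate yields $0\le\omega(\|\mathbf{x}_\alpha-\mathbf{y}_\alpha\|)\to 0$, contradicting $\sup(u-v)>0$ once one checks the maximizer does not escape to $\partial\mathcal{C}_{free}$ or to $\cstart$. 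Applying this with the two candidate solutions in both orders gives uniqueness.

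For existence I would use Perron's method: build an explicit supersolution barrier $\bar w$ (e.g.\ $\bar w(\mathbf{x}) = C\|\mathbf{x}-\cstart\|$ with $C$ large enough relative to $S_{\min}^{-1}$, corrected near $\partial\mathcal{C}_{free}$) and a subsolution $\underline w$ vanishing at $\cstart$, then set $W(\mathbf{x}) = \sup\{w(\mathbf{x}) : \underline w\le w\le\bar w,\ w\ \text{a subsolution}\}$ and use the standard bump lemma to show $W^{\ast}$ is a subsolution and $W_{\ast}$ a supersolution; the comparison principle just proved then forces $W^{\ast}\le W_{\ast}$, so $W$ is continuous and is \emph{the} viscosity solution. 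Equivalent routes I could take are vanishing viscosity (solve $-\epsilon\Delta u_\epsilon + G(\mathbf{x},\nabla u_\epsilon)=0$, extract a uniform Lipschitz bound from coercivity, and pass to the limit using Arzel\`a--Ascoli and the stability of viscosity solutions under uniform convergence), or a direct verification that the optimal-control value $T(\cstart,\mathbf{x}) = \inf_\gamma \int S(\gamma(s))^{-1}\,ds$ (the geodesic distance in the metric $S^{-1}$) solves the equation by dynamic programming --- the last of these also reconnecting with the motion-planning interpretation used throughout the paper.

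The main obstacle I anticipate is the comparison principle, and within it two points: (i) confirming the structure condition on $G$ from only mild regularity of the speed field $S$ --- routine once $S$ is Lipschitz and bounded below, which is the case for \cref{speedmodel}; and (ii) controlling the behavior at the interior singularity $\cstart$, where $T$ is merely Lipschitz and the equation degenerates --- the standard remedy is to perturb $u$ to a \emph{strict} subsolution (replace $u$ by $(1-\delta)u$, or by $u-\delta\varphi$ for a suitable $\varphi>0$ supported away from $\cstart$), run the doubling argument, and let $\delta\to 0$. Once the comparison principle is established, both the uniqueness claim and the final step of Perron's method follow at once, which completes the proof.
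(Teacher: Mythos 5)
Your proposal is correct and follows essentially the same route as the paper, which does not prove this theorem itself but defers to the classical Crandall--Lions theory (the cited \cite{crandall1983viscosity} and \cite{sethian1999fmm}): comparison via doubling of variables under the coercivity and structure conditions satisfied by the clipped speed model of \cref{speedmodel}, existence via Perron's method, and the strict-subsolution perturbation to handle the lack of a zeroth-order term and the point-source singularity. Your added precision about the Dirichlet/point-source boundary data is a genuine (and necessary) sharpening of the paper's informal statement, but not a different argument.
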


In other words, the uniqueness of the viscosity solution can inherently address the issue of local minima for the Eikonal equation. To construct the viscosity solution, Fast Marching Methods (FMM)~\cite{sethian1996fast} are commonly employed. FMM discretizes the c-space and ensures the viscosity solution by:

\begin{theorem}
    \label{thm:fmm_visc}
    As the discrete space size goes to zero, the numerical solution built by the Fast Marching Methods converges to the viscosity solution of the Eikonal equation.
\end{theorem}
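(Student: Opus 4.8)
The plan is to recognize this as an instance of the general convergence theory for monotone approximation schemes of Hamilton--Jacobi equations (Barles--Souganidis), and to verify that the Fast Marching discretization fits that framework. First I would fix a grid with spacing $h$ and write down the upwind (Godunov) finite-difference scheme that FMM actually solves: at each interior node $\mathbf{x}$ the discrete equation has the form $F_h\big(\mathbf{x}, T_h(\mathbf{x}), (T_h(\mathbf{y}))_{\mathbf{y}\sim\mathbf{x}}\big)=0$, where $F_h$ encodes sums of squared one-sided differences such as $\max\!\big(D^{-x}T_h,\, -D^{+x}T_h,\, 0\big)^2 + \dots = S(\mathbf{x})^{-2}$, together with the boundary value $T_h=0$ at the grid approximation of the source. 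The properties to establish are the classical three: (i) \emph{monotonicity} --- $F_h$ is nondecreasing in $T_h(\mathbf{x})$ and nonincreasing in each neighbor value, which holds because the Godunov Hamiltonian only reads differences in the upwind direction; (ii) \emph{consistency} --- for any smooth $v$ and any $\xi\to 0$, $F_h(\mathbf{x}, v(\mathbf{x})+\xi, v+\xi) \to \|\nabla v(\mathbf{x})\| - S(\mathbf{x})^{-1}$ as $h\to 0$, by Taylor expansion of the one-sided differences; and (iii) \emph{stability} --- the discrete solutions obey a uniform $L^\infty$ bound, obtained from a discrete comparison argument using that $S$ is positive and bounded on a bounded domain.

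Second, I would address what distinguishes FMM from a generic fixed-point iteration: it produces its solution in a single pass, by repeatedly accepting the node with the smallest tentative value and updating its not-yet-accepted neighbors. The key lemma here is the \emph{causality / monotone-ordering property}: since the scheme is monotone and the update at a node depends only on neighbors with strictly smaller accepted values, the value assigned to a node is never revised after it is accepted, and the resulting field is exactly the solution of the discrete system $F_h=0$. I would prove this by induction on acceptance order, showing that accepted values are nondecreasing and that each satisfies the discrete equation with respect to its already-accepted neighbors; uniqueness of the discrete solution then follows from the discrete comparison principle implied by monotonicity. This step certifies that ``the numerical solution built by the Fast Marching Methods'' coincides with ``the solution of the monotone scheme,'' so the two notions of convergence are the same.

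Finally, I would invoke the Barles--Souganidis theorem: a monotone, consistent, stable scheme has the property that the half-relaxed limits $\limsup$ and $\liminf$ of $T_h$ as $h\to 0$ are respectively a viscosity subsolution and a viscosity supersolution of the Eikonal equation with the given boundary data; by the comparison principle for this equation (the same ingredient underlying \cref{thm:unique}) the two limits agree with the unique viscosity solution $T$, which forces $T_h\to T$ locally uniformly. I expect the main obstacle to be the careful verification of monotonicity and consistency for the precise Godunov upwind form used in FMM --- in particular the $\max(\cdot,0)$ truncations at nodes adjacent to the source and to $\partial\mathcal{X}_{free}$, where the stencil degenerates --- together with making the causality lemma fully rigorous when several neighbors are accepted with tied values. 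The remaining pieces (the stability bound and the passage to the limit) are routine once those are settled.
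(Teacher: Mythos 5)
Your proposal is correct and follows essentially the same route as the paper, which does not prove the theorem itself but defers to \cite{barles1991convergence,tugurlan2008fast} --- i.e., exactly the Barles--Souganidis framework (monotonicity, consistency, stability, half-relaxed limits plus comparison) combined with the causality argument identifying the FMM output with the solution of the discrete upwind scheme. No substantive difference in approach.
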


The proof of this theorem is provided in \cite{barles1991convergence,tugurlan2008fast}. However, FMM has two limitations: 1) the discrete space size cannot practically reach zero implying that the numerical solution may not always converge to the viscosity solution; and 2) the computational burden of FMM increases significantly in high-dimensional spaces, as the number of discrete grids required grows dramatically with the dimensionality.

To overcome those limitations, we 
utilize neural networks to solve the Eikonal equation, which is more efficient. Meanwhile, instead of pursuing the exact viscosity solution, we introduce our physical constraints to reduce the local minima in the training process, which will be detailed in \cref{sec:phys_constraint}.

\section{Method}

\makeatletter
\renewcommand{\maketag@@@}[1]{\hbox{\m@th\normalsize\normalfont#1}}%
\makeatother

For any start-goal pair, our \pcplanner employs the time field to compute the travel time and its partial derivatives, which are then used in an iterative motion planning process to generate the final path. To train a time field for a new environment, we propose a physics-constrained self-supervised training framework (\cref{sec:phys_constraint}) incorporating a SADF (\cref{sec:sadf}) that is crucial in data preprocessing (GT generation), training (efficient collision-checking), and testing (rapid collision-checking in adaptive motion planning (\cref{sec:adapt_mp})).

As illustrated in ~\cref{fig:pipeline}, in our self-supervised framework, pairs of the start and goal configurations are first regressed into the time field through \textit{Time Field Regressor}. 
Subsequently, the time field is employed to predict the speed and determine the waypoint along the trajectory through motion planning (\textit{MP} module). The physical constraints are then applied to the start, waypoint, and goal configurations in a self-supervised manner, 
which can help the network escape local minima and converge to the correct solutions consistent with physical principles.
Moreover, the proposed \textit{SADF Predictor} is utilized to obtain the shortest distance of any given configuration to the environment.
It facilitates collision checking to ensure the collision-free status of the start, waypoint, and goal configurations for the physical constraints and aids in generating GT speed fields in the data processing stage.
For ease of illustration, the visualized time field represents the travel time from a fixed start configuration $\mathbf{c_s}$ to any goal configuration $\mathbf{c_i} \in \mathcal{C}$ for the robot and the visualized speed field denotes the speed of the robot at any configuration $\mathbf{c} \in \mathcal{C}$. Additionally, both fields are visualized in 2D space for clarity.

\subsection{Physics-Constrained Self-Supervised Learning}
\label{sec:phys_constraint}

As previously mentioned, the local minima in the solution of the NTFields are caused by the non-uniqueness of the solution for the Eikonal equation.
Motivated by traditional methods like FMM, in this section, we introduce a novel self-supervised strategy with two physical constraints related to the viscosity solution, \emph{monotonic constraint} and \emph{optimal constraint}, to enhance the physics-informed neural motion planner by escaping local minima.

\noindent\textbf{Monotonic Constraint.} From Theorems~\ref{thm:unique} and \ref{thm:fmm_visc}, we have the following property \cite{sethian1999fmm}:

\begin{property}
    \label{prop:monotone}
    Travel time solved by FMM increases monotonically away from the start point towards the goal.
\end{property}
\noindent This indicates that the travel time through any waypoint in the path does not shortcut the overall travel time from start to goal. 
Based on this, we introduce the monotonic constraint which implicitly enforces the monotonicity property in the neural network.

Specifically, for any waypoint along the planned path, the total travel time from the start to the goal configuration must exceed both the travel time from the start to the waypoint and from the waypoint to the goal configuration. More formally, let $T(\mathbf{x}, \mathbf{y})$ denote the travel time from configuration $\mathbf{x}$ to $\mathbf{y}$. We have:
\begin{align}
    \label{T_sw}
    &T(\mathbf{s}, \mathbf{g}) > T(\mathbf{s}, \mathbf{w}), \\
    \label{T_wg}
    &T(\mathbf{s}, \mathbf{g}) > T(\mathbf{w}, \mathbf{g}),
\end{align}
where $\mathbf{s}, \ \mathbf{g} \ \text{and}\ \mathbf{w} \in \mathcal{C}_{free}$ represent start, goal, and waypoint configuration respectively in the free space. 
The significance of the monotonic constraint in maintaining the fidelity of the viscosity solution can be verified by the following proposition:

\begin{proposition}
    If the solution $T$ violates the monotonic constraints, then $T$ is not the viscosity solution.
\end{proposition}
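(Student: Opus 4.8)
The plan is to argue by contraposition, leveraging the uniqueness of the viscosity solution (Theorem~\ref{thm:unique}) together with the monotonicity it inherits from the Fast Marching construction (Property~\ref{prop:monotone} via Theorem~\ref{thm:fmm_visc}). First I would record what the unique viscosity solution $T^{\ast}$ of \cref{eik_eq} actually satisfies: along the optimal trajectory $\sigma^{\ast}$ from $\mathbf{s}$ to $\mathbf{g}$, the arrival-time front expands monotonically, so for any waypoint $\mathbf{w}$ on $\sigma^{\ast}$ the dynamic-programming identity $T^{\ast}(\mathbf{s},\mathbf{g}) = T^{\ast}(\mathbf{s},\mathbf{w}) + T^{\ast}(\mathbf{w},\mathbf{g})$ holds (for an arbitrary $\mathbf{w}\in\mathcal{C}_{free}$ one still gets the one-sided triangle inequality $T^{\ast}(\mathbf{s},\mathbf{g}) \le T^{\ast}(\mathbf{s},\mathbf{w}) + T^{\ast}(\mathbf{w},\mathbf{g})$). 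This is the continuous form of Property~\ref{prop:monotone}, obtainable either by passing to the limit in the monotone FMM solutions (Theorem~\ref{thm:fmm_visc}) or directly from the optimal-control characterization of the Eikonal viscosity solution.

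Next I would sharpen this into the strict inequalities \cref{T_sw,T_wg}. Since the speed is bounded away from zero, $S^{\ast}(\mathbf{c}) \ge S_{const}\, d_{min}/d_{max} > 0$ everywhere by \cref{speedmodel}, every nondegenerate sub-path of $\sigma^{\ast}$ has strictly positive travel time; in particular, for a waypoint $\mathbf{w}$ lying strictly between $\mathbf{s}$ and $\mathbf{g}$ we get $T^{\ast}(\mathbf{s},\mathbf{w}) > 0$ and $T^{\ast}(\mathbf{w},\mathbf{g}) > 0$, so the identity (or the triangle inequality) above yields $T^{\ast}(\mathbf{s},\mathbf{g}) > T^{\ast}(\mathbf{s},\mathbf{w})$ and $T^{\ast}(\mathbf{s},\mathbf{g}) > T^{\ast}(\mathbf{w},\mathbf{g})$. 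Hence the viscosity solution satisfies the monotonic constraints for every such waypoint.

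Finally I would close by contraposition using Theorem~\ref{thm:unique}: if a candidate $T$ violates the monotonic constraints, i.e. there is a waypoint $\mathbf{w}$ with $T(\mathbf{s},\mathbf{g}) \le T(\mathbf{s},\mathbf{w})$ or $T(\mathbf{s},\mathbf{g}) \le T(\mathbf{w},\mathbf{g})$, then $T \neq T^{\ast}$ because $T^{\ast}$ satisfies all of these inequalities; and since the viscosity solution is unique, $T$ cannot be the viscosity solution.

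The main obstacle I anticipate is the first step: pinning down precisely which ``waypoints along the planned path'' the constraint quantifies over, and justifying the monotone-front / dynamic-programming property of $T^{\ast}$ at the expected level of rigor. If the waypoints are taken along the characteristic (gradient-descent) trajectory of $T^{\ast}$ itself, the identity is exactly the monotone FMM property carried to the limit via Theorem~\ref{thm:fmm_visc}; if the waypoints may be arbitrary points of $\mathcal{C}_{free}$, one only has the one-sided triangle inequality, which still suffices for \cref{T_sw,T_wg} via strict positivity of the speed but must be stated carefully. I would therefore fix this quantifier explicitly before invoking Property~\ref{prop:monotone}.
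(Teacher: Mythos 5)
Your argument is essentially the paper's own: both proceed by contraposition, showing that the viscosity solution must satisfy the monotonic constraints because it is the limit of the FMM solutions (\cref{thm:fmm_visc}), which are monotone by \cref{prop:monotone}, and then invoking uniqueness (\cref{thm:unique}) to conclude that any $T$ violating the constraints cannot be that solution. Your version is somewhat more explicit than the paper's, which simply asserts that the limit of FMM solutions inherits \cref{prop:monotone} rather than deriving the strict inequalities from the dynamic-programming identity and the positivity of the speed in \cref{speedmodel}.

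One caveat: in your final paragraph you claim that for an \emph{arbitrary} waypoint $\mathbf{w}\in\mathcal{C}_{free}$ the one-sided triangle inequality $T^{\ast}(\mathbf{s},\mathbf{g}) \le T^{\ast}(\mathbf{s},\mathbf{w}) + T^{\ast}(\mathbf{w},\mathbf{g})$ together with strict positivity of the speed ``still suffices'' for \cref{T_sw,T_wg}. That is false: the triangle inequality bounds $T^{\ast}(\mathbf{s},\mathbf{g})$ from above and says nothing about whether it exceeds $T^{\ast}(\mathbf{s},\mathbf{w})$; a waypoint far off the optimal path can have $T^{\ast}(\mathbf{s},\mathbf{w}) > T^{\ast}(\mathbf{s},\mathbf{g})$ while the triangle inequality holds. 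The monotonic constraints are only a property of waypoints \emph{on the planned (characteristic) path}, which is exactly the quantifier the paper intends and the case your main argument correctly handles, so the proof stands once you commit to that reading rather than the arbitrary-waypoint one.
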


\begin{proof}
    If $T$ violates the monotonic constraints, it fails to satisfy the property described in \Cref{prop:monotone}, which is necessary for being a solution generated by FMM. The limit of the FMM solutions also adheres to \Cref{prop:monotone}, thereby indicating that $T$ cannot be the limit of FMM solutions. Since the solution derived from FMM converges to the viscosity solution via \Cref{thm:unique}, and considering the uniqueness of the viscosity solution according to \Cref{thm:unique}, it follows that $T$ is not the viscosity solution.
\end{proof}

In this way, our monotonic constraint implicitly forces the network to converge to the viscosity solution.
We also observe that the occurrence of local minima in the time fields, depicted in \cref{fig:time_field}, is associated with waypoints that violate the specified monotonic constraints mentioned above.
Therefore, we can exploit this constraint to guide the network in a self-supervised learning process, allowing it to correct and attain an accurate time field.

However, since our objective is to find the optimal path through the Eikonal equation, obtaining a ground-truth waypoint on the optimal path before receiving the correct solution from the Eikonal equation proves impractical. A straightforward strategy involves employing a traditional path planning approach, such as FMM, to find the waypoint and thus guide the learning. 
However, this comes at a significant cost due to the slow and cumbersome nature of traditional methods.

To address this challenge, we integrate a self-correction mechanism in a self-supervised manner into network learning. During training, we utilize the network to plan a path and designate any segment of the path as our waypoint.
Our goal is to prompt the network to recognize that the current generated path violates the monotonic constraint.
This is achieved by integrating the monotonic constraint loss into the network:
\begin{equation}
    \begin{aligned}
        \mathcal{L}_{m} = 
        &\  \sum \max \left(  T(\mathbf{s}, \mathbf{w}) - T(\mathbf{s}, \mathbf{g}), 0 \right) + \\
        &\  \sum \max \left(  T(\mathbf{w}, \mathbf{g}) - T(\mathbf{s}, \mathbf{g}), 0 \right).
    \end{aligned}
\end{equation}
While, from a global perspective, the chosen waypoint in each training batch does not represent the waypoint of the eventual optimal path, the network perceives it as optimal during the training step. Meanwhile, the selection of the waypoint evolves with the monotonic constraint loss and will be optimized during each training batch of the network. This self-supervised training approach with the monotonic constraint loss demonstrates improvement in addressing local minima, as illustrated in \cref{fig:time_field}.

\noindent\textbf{Optimal Constraint.}
To further refine the motion planning process, we introduce the \emph{optimal constraint}, which penalizes the path for deviating from the optimality criterion of the Eikonal equation's solution:
\begin{equation}
    \label{T_sg}
    T(\mathbf{s}, \mathbf{g}) \leq T(\mathbf{s}, \mathbf{w}) + T(\mathbf{w}, \mathbf{g}).
\end{equation}
Similar to the previously discussed monotonic constraint loss, we introduce an optimal constraint loss term as follows:
\begin{equation}
    \mathcal{L}_{o} = \sum \max \left( T\left(\mathbf{s}, \mathbf{g}\right) - \left(T\left(\mathbf{s}, \mathbf{w}\right) + T\left(\mathbf{w}, \mathbf{g}\right) \right ), 0 \right).
\end{equation}

Following NTFields, we utilize the isotropic speed loss to govern the training of the time field:
\begin{equation}
    \label{L_speed}
    \fontsize{7.9}{8.9}\selectfont
        \begin{aligned}
        \mathcal{L}_{s} = \frac{1}{|\mathcal{C}|} \sum_{\cfst, \csec \in \mathcal{C}} \Big (
        &  \left \| 1-\sqrt{S^{\ast }(\cfst ) /S_{\Theta}(\cfst)}  \right \| + \left \| 1-\sqrt{S^{\ast }(\csec ) /S_{\Theta}(\csec)}  \right \| + \\
        & \left \| 1-\sqrt{S_{\Theta}(\cfst) /S^{\ast }(\cfst )}  \right \|+  \left \| 1-\sqrt{S_{\Theta}(\csec) /S^{\ast }(\csec )}  \right \| \Big ),
        \end{aligned}
\end{equation}
where $|\mathcal{C}|$ denotes the number of sampled configurations.
$\cfst$ and $\csec$ represent an arbitrary pair of the start and goal configurations in the c-space. $S^{\ast }(\cfst)$ and $S^{\ast }(\csec)$ are the GT speed values calculated by our SADF (introduced in \Cref{sec:sadf}) according to the speed model in \cref{speedmodel}. $S_{\Theta}(\cfst)$ and $S_{\Theta}(\csec)$ are the predicted speed values derived from the predicted time field according to \cref{eik_eq}.

With the guidance of our physical constraints, the total loss function for the training of the time field is defined as:
\begin{equation}
    \mathcal{L} = \mathcal{L}_{s} + \lambda_m\mathcal{L}_{m} + \lambda_o\mathcal{L}_{o},
\end{equation}
where $\lambda_m$ and $\lambda_o$ control the penalty weights for the monotonic constraint and optimal constraint respectively.

\noindent\textbf{Training details.}
To calculate the speed loss during training, we follow these steps:
1) sample configuration pairs $[\cfst, \csec]$ in the robot's c-space;
2) derive the GT speed values $[S^{\ast }(\cfst), S^{\ast }(\csec)]$ calculated by \cref{speedmodel};
3) approximate $T_{\Theta}(\cfst, \csec)$ via time field regressor;
4) obtain the predicted speed values $[S_{\Theta}(\cfst), S_{\Theta}(\csec)]$ according to \cref{eik_eq};
Finally, the speed loss \cref{L_speed} can be derived via the GT speed and predicted speed.
For the physical constraint loss, we adopt a different paradigm:
1) sample configuration pairs $[\cfst, \csec]$ in the feasible c-space $\mathcal{C}_{free}$;
2) leverage the time field regressor to obtain $T(\cfst, \csec)$;
3) utilize the gradient of the time field to conduct motion planning, which determines the next waypoint $\mathbf{c_w}$. If the waypoint is in collision, jump to step 1 to resample a new configuration pair;
4) compute $T(\cfst, \mathbf{c_w})$ and $T(\mathbf{c_w}, \csec)$ through time field regressor.
Ultimately, the physical constraint loss can be calculated with $T(\cfst, \csec)$, $T(\cfst, \mathbf{c_w})$ and $T(\mathbf{c_w}, \csec)$.

\subsection{Shape-Aware Distance Function}
\label{sec:sadf}

When a shaped robot is navigating in the environment, it becomes insufficient only to obtain the distance field of the environment, as a real-world robot cannot be simplistically treated as a particle but rather as a rigid or even deformable body. 
Therefore, we propose the Shape-Aware Distance Function (SADF) to address the requirements of real-shaped robots.
We first introduce how to derive the SADF for the rigid robots and then extend it to the articulated robots.

\noindent\textbf{Rigid Robot.}
Following the definition of Signed Distance Function (SDF), 
the Shape-Aware Distance Field (SADF) is defined as the distance between the surface of the robot and the environment:
\begin{align}
    \label{eq:sadf_def}
    &f_{[r, e]}(H) = \min_{\mathbf{x} \in r, \mathbf{y} \in e}d(H(\mathbf{x}), \mathbf{y}),\\
    &\text{with}\quad H(\mathbf{x}) = R\mathbf{x} + \mathbf{t}, \quad R \in \mathbb{RO}(3),\ \mathbf{x}, \mathbf{t} \in \mathbb{R}^3
\end{align}
where $r$ and $e$ denote the surface boundaries of the robot and the environment, $H \in \mathbb{SE}(3)$ denotes the relative transformation from the robot's coordinate system to the environment's coordinate system, and $d$ is the distance function between any two points.

\begin{figure}[!t]
    \centering
    \includegraphics[width=\linewidth]{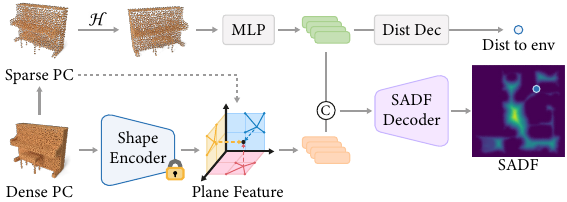}
    \caption{Training pipeline of SADF. During inference, "Dist Dec" is omitted, with only "SADF Decoder" branch employed as a SADF predictor.}
    \label{fig:sadf}
    \Description{The architecture of the proposed Shape-Aware Distance Field.}
\end{figure}

Given a raw point cloud $P = \{p_i \in \mathbb{R}^3\}_{i=1}^{N}$ comprising $N$ points from the robot's surface and considering the transformations $H$ of the robot, our objective is to learn the SADF between an arbitrary robot and a fixed environment, denoted as $\Phi_e(P, H) = f_{[P, e]}(H)$.

To obtain a precise SADF, it is necessary to sample dense points on the robot. However, directly calculating the distance from every point to the environment and obtaining the minimum for every transformation $H$, following the definition (\cref{eq:sadf_def}), can be computationally burdensome 
and inefficient.
To tackle the problem, we propose employing a sparse point cloud and local shape feature derived from the dense point cloud to characterize the robot, and use the neural network to approximate the SADF.

Specifically, the sparse point cloud $P_{s}$, downsampled from the dense point cloud $P_d$, first undergoes transformation based on $H$ and is then processed into the distance feature $\mathbf{F}_d$
through a small MLP parameterized by $\Theta_{1}$. The distance feature, which encapsulates the transformation information between the points and the environment, is then decoded to derive the distance from a point to the environment via a point distance decoder $\Theta_{2}$.
The loss function to govern the MLP and decoder is defined as:
\begin{equation}
    \fontsize{8.5}{9.5}\selectfont
    \mathcal{L}_d = \frac{1}{|\mathcal{H}||P_{s}|} \sum_{H \in \mathcal{H}, \mathbf{x} \in P_{s}} \| \hat{\operatorname{SDF}}_e(H(\mathbf{x});\Theta_{1}, \Theta_{2}) - \operatorname{SDF}_e(H(\mathbf{x})) \|, 
\end{equation}
where $|P_{s}|$ and $|\mathcal{H}|$ denote the number of sparse point cloud and the number of sampled transformations respectively, $\operatorname{SDF}_e(\cdot)$ represents the GT SDF of the environment, and $\hat{\operatorname{SDF}}_e(\cdot;\Theta_{1}, \Theta_{2})$ indicates the learned SDF function parameterized by $\Theta_{1},\Theta_{2}$.

Meanwhile, we utilize the pre-trained encoder from \cite{chou2022gensdf} to extract the plane features of the robot from densely sampled points $P_d$. Then, we aggregate the local shape feature $\mathbf{F}_s$ for each sparse point via bilinear interpolation in the plane feature space. 
At last, we concatenate $\mathbf{F}_d$ and $\mathbf{F}_s$ of each sparse point and feed them to the SADF decoder $\Theta_{3}$ to decode the ultimate SADF in the environment, as shown in \cref{fig:sadf}.
The loss function for the SADF training is as follows: 
\begin{equation}
    \fontsize{7.8}{8.8}\selectfont
    \mathcal{L}_{SADF} = \left(\frac{1}{|\mathcal{H}|}\sum_{H \in \mathcal{H}} \|\hat{\Phi}_{e}(P_d, H;\Theta_{1},\Theta_{3},\Theta_{4}) - \Phi_{e}(P_d, H) \|\right) + \lambda_d \mathcal{L}_d,
\end{equation}
where $\Theta_{4}$ denotes the frozen parameters of the shape encoder, $\hat{\Phi}_{e}(\cdot, \cdot;\Theta_{1},\Theta_{3},\Theta_{4})$ represents the learned SADF parameterized by $\Theta_{1},\Theta_{3},\Theta_{4}$, and $\lambda_d$ determines the weights of $\mathcal{L}_d$.
Note that the encoded shape feature $\mathbf{F}_s$, derived from the dense point cloud, only needs to be computed once for a specific robot. This will significantly reduce the overall computational cost.

\noindent\textbf{Articulated Robot.}
To model the articulated robots, we draw inspiration from \cite{li2024representing} and represent them by the union of each part's SADF.
Consider the articulated robot with $m$ links, characterized by shapes $\boldsymbol{l}=\{l_0, l_1, ..., l_{m-1}\}$, the SADF for the articulated robot can be represented by the union of each link's SADF, which is defined as:
\begin{equation}
    \begin{aligned}
        f_{[r, e]}(H_b) = \min \{f_{[l_i, e]}(H^b_i H_b)\}, \ i = \{0, 1, ..., m-1\},
    \end{aligned}
\end{equation}
where $H_b$ denotes the transformation from the base frame of the robot to the world frame, and $H^b_i$ represents the transformation from the $i$-th link's frame to the base frame.

\subsection{Adaptive Motion Planning}
\label{sec:adapt_mp}

\begin{figure*}[!t]
    \centering
    \includegraphics[width=\linewidth]{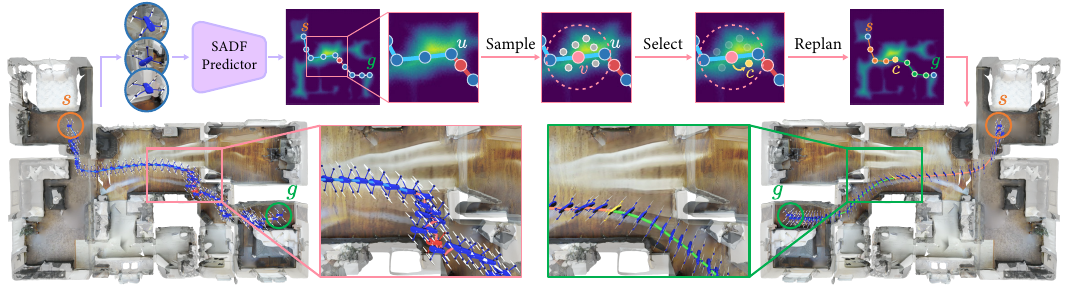}
    \caption{
        Adaptive motion planning.
        We locate a collision-free waypoint $\mathbf{u}$ adjacent to the collision point and randomly sample points around $\mathbf{u}$. These sampled points serve as candidates to escape local minima. The candidate point $\mathbf{c}$ is selected based on traversal times calculated from $\cstart$ to $\mathbf{c}$ and from $\mathbf{c}$ to $\cgoal$. 
    }
    \label{fig:adaptive_mp}
    \Description{Adaptie motion planning}
\end{figure*}

Due to the unpredictability of the network and the fact that our physical constraints serve as a necessary condition for the viscosity solution of the Eikonal equation but not a sufficient one, it remains challenging to avoid all of the local minima entirely.
Therefore, we introduce an adaptive motion planning approach illustrated in~\cref{fig:adaptive_mp} to bolster the robustness of the planning procedure, leveraging the fast inference speed of our SADF for collision checking.

For the collision-free case, the ultimate path solution is obtained by performing gradient descent bidirectionally, traversing from the start to the goal and vice versa, which follows the trivial planning strategy of NTFields:
\begin{gather}
    \cstart_{i+1}  = \cstart_{i} +\alpha S^{2}(\cstart_i)\nabla _{\cstart_i} T(\cstart_i,\cgoal_i)\label{caulate_path_start}, \\
    \cgoal_{i+1}  = \cgoal_{i} +\alpha S^{2}(\cgoal_i)\nabla _{\cgoal_i} T(\cstart_i,\cgoal_i)\label{caulate_path_goal},
\end{gather}
where $\alpha$ is a step size hyperparameter, and $S^{2}(\cstart_i)$ or $S^{2}(\cgoal_i)$ is a regulation coefficient to address safety issues arising from low speeds near obstacles, which can cause large gradients due to the inverse speed-gradient relationship in \cref{eik_eq}.

If a collision is detected using our SADF, we employ adaptive motion planning. As shown in~\cref{fig:adaptive_mp},  we proceed to find a collision-free waypoint $\mathbf{u}$ adjacent to the collision point,
thereby forming the longest collision-free path from $\mathbf{u}$  to either the start or goal configuration. 
Then, we randomly select a point $\mathbf{v}$ along such a collision-free path for replanning.
Random sampling employed here is to increase the probability that subsequently sampled points are also collision-free.
Similar to the initialization of the sampling-based method, we randomly sample an appropriate number of points within a hypersphere in the configuration space, with $\mathbf{v}$ as the center and $r$ as the radius. These points serve as candidate points to escape local minima.
Subsequently, the candidate point $\mathbf{c}$ can be utilized as a waypoint on the path to devise a new collision-free trajectory which is formed as $\cstart-\mathbf{c}-\cgoal$.
To retain the optimality, we calculate the sum of $T(\cstart,\mathbf{c})$ and $T(\mathbf{c},\cgoal)$ to obtain the total time of the replanned path from the learned time field for each candidate point $c$. We then choose the candidate point with the shortest time to form the final trajectory.
Moreover, we adaptively enlarge the search radius $r$ if a collision-free path is not found within the candidate points.
This adaptive strategy aids in exploring a larger configuration space to find feasible paths when necessary, enhancing the robustness of our \pcplanner.
It's worth noting that this strategy is cost-affordable as it is applied only on the collision path (which is rare in practice thanks to our physics-constrained learning).
\section{Experiments}

In this section, we analyze and validate our method with different robots and environments.
We compare our methods with the baselines NTFields~\cite{ni2023ntfields}, P-NTFields~\cite{ni2023progressive}, FMM~\cite{sethian1996fast}, RRT*~\cite{kingston2018sampling}, RRT-Connect~\cite{kuffner2000rrt}, and LazyPRM*~\cite{bohlin2000path}.
For RRT*, RRT-Connect, and LazyPRM* which are probabilistically complete and can theoretically find a solution given infinite time, we impose a practical time limit~(10 seconds for rigid robots and 5 seconds for manipulators), since real-world scenarios often require time-constrained solutions.
Our evaluation metrics include \textbf{path length}, \textbf{planning time}, \textbf{success rate~(SR)}, and \textbf{challenging success rate (CSR)}.
For additional information on the experimental settings, including metrics description, baseline details, and experimental specifics, please refer to our supplementary material.

\subsection{Motion Planning in 3D Environments for Rigid Robots}

\begin{table}[!t]
\centering
\caption{
    Comparison of motion planning in 3D for rigid robots. The optimal results are highlighted with \colorbox{colorFst}{\bf first}, \colorbox{colorSnd}{second}.
}

\footnotesize
\setlength{\tabcolsep}{1.5pt}

\resizebox{\linewidth}{!}{
\scalebox{0.85}{

\begin{tabular}{lccccccc}
\toprule[0.15em]
\multirow{2}{*}{Methods}  & \multirow{2}{*}{Metrics} & \multicolumn{3}{c}{Arona} & \multicolumn{3}{c}{Eastville} 
\\ \cmidrule(lr){3-5} \cmidrule(lr){6-8} 
&     & Bear      & Mobile Root  & Bird($\mathbb{SE}(3)$)    & Piano     & Toy Car & Drone($\mathbb{SE}(3)$)       \\
\midrule[\ourmidrulewidth]

\multirow{\metricsize}{*}{RRT*}      
& Length                   & 0.26          & 0.26        & 0.23       & 0.19          &\nd 0.23         & \nd 0.23      \\
& Time(ms)                 & 10189.2       & 10203.1     & 10121.2      & 10215.2       & 10226.5       & 10196.6 \\
& SR(\%)                   & 83.7          & 81.8        &  89.6      & 80.4          & 80.7            & 83.7  \\ 
& CSR(\%)                  & 84.4          & 37.3        &  29.3     & 60.5          & 63.0             & 29.2\\
\midrule
\multirow{\metricsize}{*}{LazyPRM*}  
& Length                   & \nd 0.25      & \fs 0.24     & \fs 0.21     & \nd 0.18      & \fs0.20     &\fs 0.19  \\
& Time(ms)                 & 10158.9       & 10152.6      & 10121.9  & 10152.3       & 10252.3    &  10130.5 \\
& SR(\%)                   & 84.6          & 87.4         & 92.9  & 81.6          & 87.3         &   90.3 \\
& CSR(\%)                  & 85.6          & 64.8         & 51.7  & 68.4          & 75.2         &  57.1 \\
\midrule
\multirow{\metricsize}{*}{RRT-Connect}
& Length                   & 0.70          & 0.72        & 1.1        & 0.66          & 0.70       & 1.03   \\
& Time(ms)                 & 1543.4        & 1246.2      & 959.3      & 2008.1        & 1568.7     & 1259.1    \\
& SR(\%)                   & 90.0          & 91.5        &  \nd 93.6  & 85.5          & 89.4       & 89.7   \\
& CSR(\%)                  & 94.6          & \nd 77.9        &  \nd60.5   & 72.8          & 81.0       &  \nd 68.6   \\
\midrule[\ourmidrulewidth]
\multirow{\metricsize}{*}{NTFields}  
& Length                   & \nd 0.25      & \nd 0.25        & \fs 0.21     & \nd 0.18      & \fs0.20     & \fs 0.19   \\
& Time(ms)                 & 6.1           & 6.1             & \nd 4.5      & \nd 5.7       & 6.7         & \nd 2.5   \\
& SR(\%)                   & 85.4          & 77.3            & 88.5         & 86.1          & 94.3        & 86.4  \\
& CSR(\%)                  & 56.3          & 45.6            & 22.5         & 72.4          & 88.7        & 42.0 \\
\midrule
\multirow{\metricsize}{*}{P-NTFields}    
& Length                   & \fs 0.24      & \fs 0.24       & \fs 0.21        & 0.2           & \nd0.21      & \fs 0.19     \\
& Time(ms)                 & \nd 2.8       & \nd 2.7        & \fs 1.4         & 6.5           & \nd4.2       & \fs 1.7  \\
& SR(\%)                   & 94.9          &  \nd 96.7         & 86.3         & 80.4          & 88.9         & 85.4   \\   
& CSR(\%)                  & 85.0          & \fs 91.1       & 7.5             & 61.5          & 79.8         & 35.4   \\
\midrule[\ourmidrulewidth]
\multirow{\metricsize}{*}{\begin{tabular}[c]{@{}l@{}}Ours w/o\\ adapt. planning\end{tabular}}
& Length                   & \fs 0.24          & \fs 0.24     & \fs 0.21      & \fs 0.17     & \fs0.20         & \fs 0.19\\
& Time(ms)                 & \fs 1.9           & \fs 2.3      & 4.9           & \fs 1.9      & \fs1.7          & 4.6  \\
& SR(\%)                   & \nd 99.7          & 96.0         & \fs 95.8      & \nd91.3      & \nd 96.2     & \nd 92.7     \\ 
& CSR(\%)                  & \nd 99.1          & 88.8         & \fs 72.1      & \nd83.0      & \nd 92.5        &   67.7     \\
\midrule
\multirow{\metricsize}{*}{Ours}   
& Length                  & \fs 0.24          & \fs 0.24       & \fs 0.21   & \fs 0.17     & \fs0.20           & \fs 0.19  \\
& Time(ms)                & \nd 2.8           & 26.0           & 4.6        & 49.0         & 25.1              & 38.5\\ 
& SR(\%)                  & \fs 99.8          & \fs 96.8       & \fs 95.8   & \fs 92.6     & \fs 96.9         & \fs 93.2      \\ 
& CSR(\%)                 & \fs 99.4          & \fs 91.1       & \fs 72.1   & \fs 85.6     & \fs 93.5         & \fs  70.0       \\

\toprule[0.15em]
\end{tabular}
 }
}

\label{table:3d_se2_se3}

\end{table}

We perform a comparative analysis of our method against the baselines in two complex 3D Gibson environments~(Arona and Eastville)~\cite{xiazamirhe2018gibsonenv} in $\mathbb{SE}(2)$ and $\mathbb{SE}(3)$ space with rigid robots. 
In those two Gibson environments, we employ different robots. In one scenario, we deploy a mobile robot, a bear, and a bird for navigation within the environment. In the other scenario, we showcase the use of a piano, a toy car, and a drone to plan in the environment. Specifically, the mobile robot, bear, piano, and toy car navigate in $\mathbb{SE}(2)$ space with 3 Degrees of Freedom (DoFs) while the bird and drone plan in $\mathbb{SE}(3)$ space with 6 DoFs.
These planning examples are depicted in \cref{fig:result}. 
Additional experiments on 2D environments in $\mathbb{SE}(2)$ space are demonstrated in the supplementary material.

The quantitative results are listed in \cref{table:3d_se2_se3}. In the Arona environment, our proposed method demonstrates the best SR, CSR, and path length with a competitive computation time. It is worth mentioning that without adaptive planning, our method achieves minimal computation time and best or second-best SR, CSR and path length in most tasks. Moreover, our method is more efficient compared to traditional methods, ranging from at least 40 times to as much as 200 times faster than traditional methods, indicating an affordable time cost of our adaptive strategy.
This highlights the effectiveness and superiority of our proposed method.

Additionally, we validate our methods on the real-world \textbf{TurtleBot4} robot in the complex meeting room and hallway environments with clustered obstacles, as shown in \cref{fig:real_robot}.

\begin{figure*}
    \centering
    \includegraphics[width=\linewidth]
    {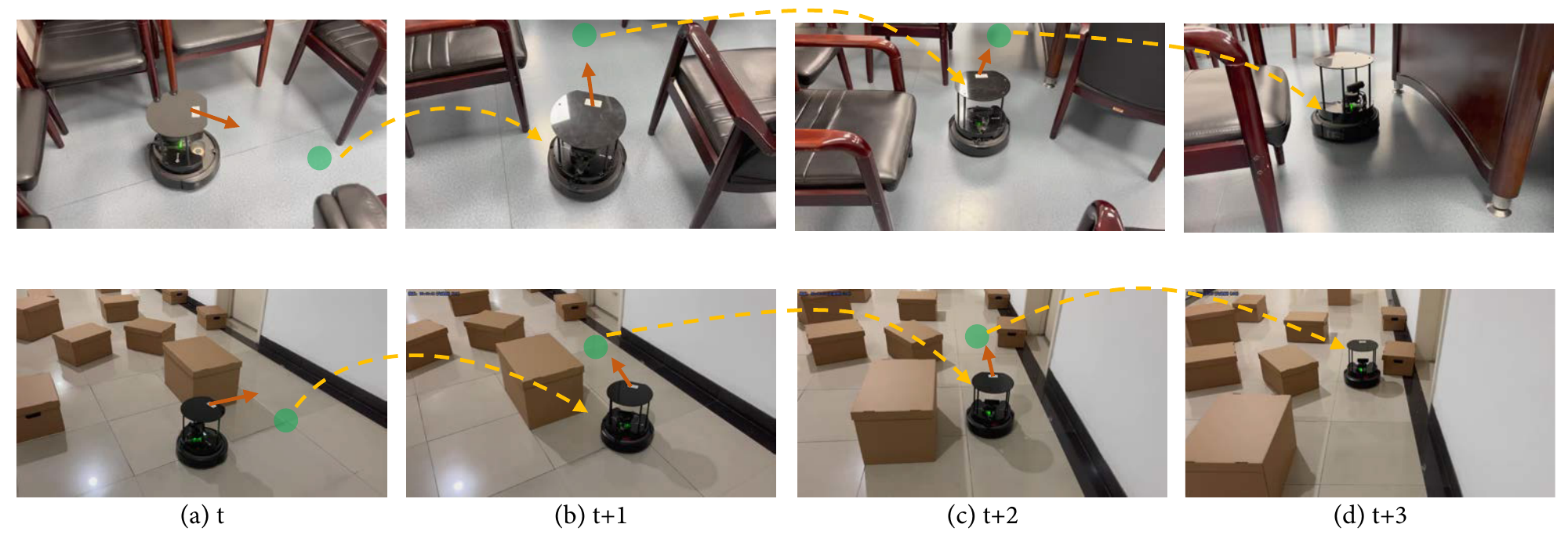}
    \caption{A real TurtleBot4 robot navigates in the real-world meeting room with clustered chairs and the hallway with cluster boxes. }
    \label{fig:real_robot}
    \Description{Real robot results.}
\end{figure*}

\begin{table}[!t]
\centering
\caption{
    Comparison of motion planning for manipulators.
    The optimal results are highlighted with \colorbox{colorFst}{\bf first}, \colorbox{colorSnd}{second}.
}
\footnotesize
\setlength{\tabcolsep}{5pt}

\begin{tabular}{lccccc}
\toprule[0.15em]
\multirow{3}{*}{Methods}
& \multirow{3}{*}{Metrics} & \multicolumn{2}{c}{Manual Craft} & Single-cabinet  & Dual-cabinet  \\
&  & \multicolumn{2}{c}{custom-arm} &  UR5 arm & UR5 arm \\
\cmidrule(lr){3-4} \cmidrule(lr){5-5} \cmidrule{6-6}
&                          & 4-DoF             & 6-DoF
&                          6-DoF              & 6-DoF                       \\

\midrule[\ourmidrulewidth]
\multirow{\metricsize}{*}{RRT*}
& Length                   & \nd 0.28         &  0.23                &  0.35              & \nd 0.25                 \\
& Time(ms)                 & 5120             & 5120                 & 5140               & 5130    \\
& SR(\%)                   & 90.1             & 90.5                 & 88.1               & 85.6       \\
& CSR(\%)                  & 55.5             & 48.4                 & 40.5               & 41.0 
  \\
\midrule
\multirow{\metricsize}{*}{LazyPRM*}
& Length                   &  \fs 0.25        &  \nd 0.21            & \fs 0.28          & \fs 0.21                \\
& Time(ms)                 & 5080             & 5060                 & 5070               & 5080       \\
& SR(\%)                   & 98.2             & 97.9                 & 98.8               & 97.5          \\
& CSR(\%)                  & 86.4             & 87.9                 & 85.1               & 85.6        \\
\midrule
\multirow{\metricsize}{*}{RRT-Connect}
& Length                   & 0.83             & 1.04                & 1.04               & 1.03            \\
& Time(ms)                 & 370              & 460                 & 360                & 810           \\
& SR(\%)                   & 97.4             & 97.4                & \nd 98.9           & 96.7           \\
& CSR(\%)                  &  93.6            & 93.6                & 89.2               & \fs 89.0          \\
\midrule[\ourmidrulewidth]
\multirow{\metricsize}{*}{NTFields}
& Length                   & \fs 0.25         & \fs 0.20            & \fs 0.28           & \fs 0.21           \\
& Time(ms)                 & 4.9              & \nd 1.5             & \nd 1.7            & \nd 1.9             \\
& SR(\%)                   & 91.2             & 96.0                & 97.1               & 93.1         \\
& CSR(\%)                  & 60.0             & 74.5                & 60.8               & 60.7           \\
\midrule
\multirow{\metricsize}{*}{P-NTFields}
& Length                   & \fs 0.25         & \fs 0.20             & \fs 0.28           & \fs 0.21        \\
& Time(ms)                 & \fs 1.0          & \fs 1.1              & \fs 1.2            & \fs 1.3        \\
& SR(\%)                   & 89.7             & 87.9                 & 95.5               & 84.9        \\
& CSR(\%)                  & 53.2             & 24.2                 & 40.5               & 14.5        \\
\midrule[\ourmidrulewidth]
\multirow{\metricsize}{*}{\begin{tabular}[c]{@{}l@{}}Ours w/o\\ adapt. planning\end{tabular}}
& Length                   & \fs 0.25         & \fs 0.20             & \fs 0.28              & \fs 0.21      \\
& Time(ms)                 & \nd 1.2          & \fs 1.1              & \fs 1.2               & 2.9         \\
& SR(\%)                   & \nd 99.0         & \nd 99.4             & \fs99.6               & \nd 97.8      \\
& CSR(\%)                  & \nd 95.5         & \nd 96.2             & \fs 96.0              & 87.3      \\
\midrule
\multirow{\metricsize}{*}{Ours}
& Length                   & \fs 0.25         & \fs 0.20              & \fs 0.28              & \fs 0.21   \\
& Time(ms)                 & 2.5              & 2.0                   & 18.4                  & 58.3  \\
& SR(\%)                   & \fs 99.4         & \fs 99.8              & \fs99.6               & \fs 97.9  \\
& CSR(\%)                  & \fs 97.3         & \fs 98.7              & \fs96.0               & \nd 87.9 \\

\toprule[0.15em]
\end{tabular}

\label{table:arm_simple}

\end{table}

\begin{table}[!t]
\centering
\caption{
    Performance comparison of \pcplanner, specifically under conditions without adaptive motion planning, using our SADF against BVH-distance-query with different sampled surface points. PC indicates the physical constraints. The optimal results are highlighted with \colorbox{colorFst}{\bf first}, \colorbox{colorSnd}{second}.
}
\footnotesize
\begin{tabular}{lc|l|cccc}
\toprule[0.15em]
\multirow{2}{*}{Methods} & \multirow{2}{*}{Points} & \multirow{2}{*}{Metric} & \multicolumn{2}{c}{Aronna}  & \multicolumn{2}{c}{EastVille} \\ \cmidrule(lr){4-5} \cmidrule(lr){6-7}
&                         &              & Bear & Mobile Robot & Piano & Car \\ \midrule
NTFields                  &  1024 & \multirow{4}{*}{SR(\%)} & 85.4 & 77.3 & 86.1 & 94.3   \\
PC + BVH                &              1024 &    & \fs99.9 & \fs97.0 & \fs96.1 & \fs96.6 \\
PC + BVH                &                32 &    & 95.9 & 93.7 & 89.5 & 95.9  \\ 
PC + SADF               &                32 &        & \nd99.7 & \nd96.0 & \nd91.3 & \nd96.2 \\
                \bottomrule[0.15em]
\end{tabular}


\label{table:ablation_points}
\end{table}

\begin{figure*}
    \centering
    \includegraphics[width=\linewidth]{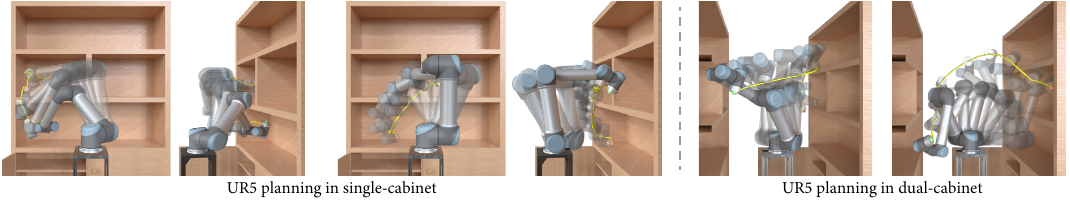}
    \caption{The UR5 manipulator executes motion planning in the single-cabinet and dual-cabinet environment.}
    \label{fig:ur5_plan}
    \Description{Ur5 planning result.}
\end{figure*}


\begin{figure}
    \centering
    \includegraphics[width=\linewidth]{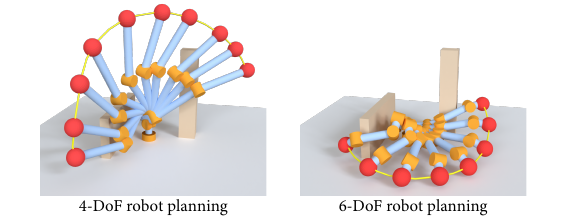}
    \caption{4-DoF and 6-DoF custom-built arms do motion planning in the manually crafted environment.}
    \label{fig:cutom_arm_plan}
    \Description{Custom Arm planning results.}
\end{figure}

\subsection{Motion Planning for Manipulators}
This section showcases the performance of our method on both custom-built and standard UR5 manipulators across various scenarios.
These scenarios include a simple manually crafted environment, a single-cabinet operating environment, and a dual-cabinet opposing operating environment, each with increasing levels of difficulty.

As shown in ~\cref{table:arm_simple}, we conducted 4-DoF and 6-DoF experiments with a custom-built arm in the manually crafted environment. Our method outperformed in all metrics except for the time. However, the time difference compared to the best performance is almost negligible.
The experiments with the 6-DoF UR5 manipulator are presented in the last two columns of ~\cref{table:arm_simple}. In the single-cabinet environment, our method continues to top all other metrics, even without adaptive planning. In the dual-cabinet opposing environment, ours maintains the highest SR and ranks second in CSR under challenging cases. Planning examples are depicted in \cref{fig:ur5_plan,fig:cutom_arm_plan}.

\begin{table}[!t]
\centering
\caption{
    Comparison of collision-checking time among FCL, BVH-distance-query, and our SADF with different sampled points on various numbers (10, $\dots$, 10000) of relative transformation $H$ between robot and environment.}
\footnotesize
\setlength{\tabcolsep}{7pt}
\begin{tabular}{lcccc}
\toprule[0.15em]

&\multicolumn{4}{c}{Time (ms)} \\
\cmidrule(lr){2-5}
Transformation Numbers & 10 & 100 & 1000 & 10000 \\
\midrule
FCL                                  & 1.0 & 4.9 & 40.4 & 392.6 \\ 
BVH  + 1024 points               & 5.4 & 9.1 & 39.0 & 382.1  \\
BVH  + 32 points                & 5.8 & 5.7 & 6.7 & 31.2   \\
SADF + 32 points           & \textbf{0.7} & \textbf{0.8} & \textbf{1.4} & \textbf{13.4} \\
\bottomrule[0.15em]
\end{tabular}


\label{table:ablation_time}
\end{table}

\begin{table}[!t]
\centering
\caption{
    Comparison of training time for robots with different DoFs in a new environment.
}
\footnotesize
\begin{tabular}{lcc}
\toprule[0.15em]
\multirow{2}{*}{Methods} & \multicolumn{2}{c}{Time (h)} \\ \cmidrule(lr){2-3}
& 3-DoF robot & 6-DoF robot \\ \midrule
NTFields             & 1.0 & 9.1   \\ 
P-NTFields           & 3.7 & 31.6   \\ 
Ours                 & 1.3 & 14.1
                \\ \bottomrule[0.15em]
\end{tabular}

\label{table:ablation_training_time}
\end{table}

\subsection{Ablation Studies}

The effectiveness of the proposed adaptive planning strategy can be validated through \cref{table:3d_se2_se3,table:arm_simple}. As shown in \cref{table:ablation_points}, all pipelines utilizing physical constraints (with or without SADF) achieve a higher SR than the baseline NTFields, which verifies the effectiveness of our physical constraints.

We also investigate the influence of the SADF on our robot motion planning and present the quantitative results.
For our SADF, the robot's sparse and dense point clouds consist of 32 and 1024 points, respectively.
The Ground Truth (GT) to train our SADF is generated using BVH-distance-query \cite{Karras2012bvh} with 1024 sampled points of the robots (detailed in supplementary materials). Consequently, we evaluate our SADF against BVH-distance-query using 32 and 1024 sampled surface points within the time fields + physical constraints pipeline.
From \cref{table:ablation_points}, it is evident that utilizing our SADF results in a higher SR compared to using BVH-distance-query with only 32 surface points, and it is still competitive with using GT distance field generated with 1024 points.

At last,
we compare the collision-checking time cost of our SADF with BVH-distance-query and FCL (a common collision-checking library) \cite{pan2012fcl} to demonstrate the lightweight nature of our SADF as illustrated in \cref{table:ablation_time}. The results indicate that both FCL and BVH-distance-query are nearly 30 times slower than our learned SADF, particularly as the number of query points increases, which can be the bottleneck in both the training procedure and the real-time planning scenarios.

\section{Limitations and Future Work}

We report the training time for NTFields, P-NTFields, and our method in a new environment, as shown in \cref{table:ablation_training_time}. Our method shows comparable training times to NTFields and a significant reduction compared to P-NTFields.  Once trained, our method is able to generate paths in static scenes notably faster than training-free methods like RRT* and RRT-Connect.  However, in dynamic environments, training-free methods, which provide solutions within seconds, become valuable alternatives. A promising direction for future work is to further investigate the generalization capabilities of physics-informed methods in new environments.

Additionally, although the proposed neural SADF is agnostic to robot shapes, it is environment-specific, \ie, we need to train the SADF for a new environment. It is also interesting to further make it environment-agnostic.

\begin{figure*}
    \centering
    \includegraphics[width=\linewidth]{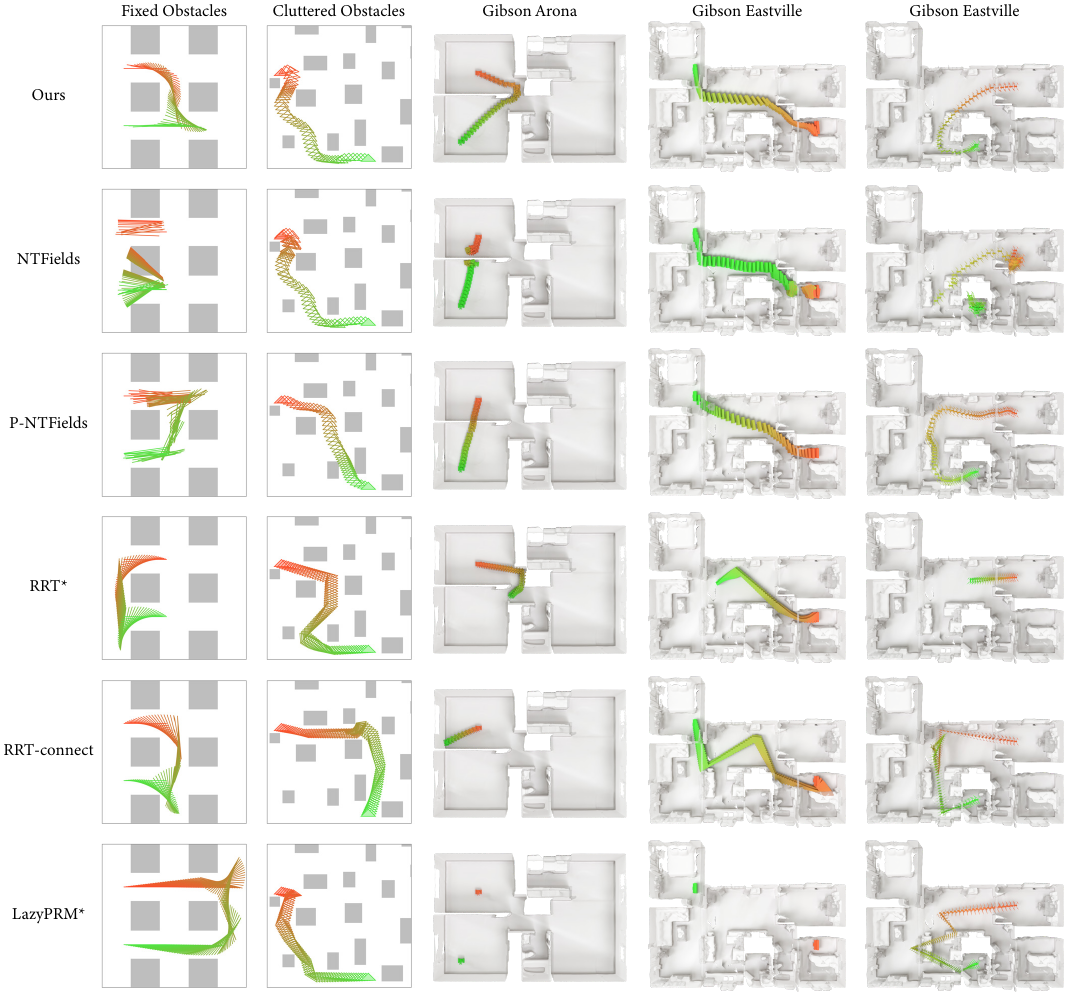}
    \caption{The comparison results on multiple environments with different robots of various shapes.}
    \label{fig:result}
    \Description{Planning result.}
\end{figure*}

\section{Conclusion}
This paper introduces a physics-constrained self-supervised learning framework for efficient and robust neural motion planning navigation in complex environments, named \pcplanner. To this end, we propose two physical constraints, namely monotonic and optimal constraints, to mitigate issues related to local minima in the Eikonal equation and introduce a novel shape-aware distance field to expedite the application of the physical constraints.
Additionally, we develop an adaptive motion planning strategy to enhance the robustness of our proposed \pcplanner.
Experiments with diverse robots in various scenarios demonstrate the efficacy of our method.

\begin{acks}
    We thank all the reviewers for their constructive comments and extend our gratitude to Xiao Liang for his help. We would also like to acknowledge the support of NSFC (No.~62102356, No.~62322207), Information Technology Center, and State Key Lab of CAD\&CG, Zhejiang University.
\end{acks}

\bibliographystyle{ACM-Reference-Format}
\bibliography{main}

\clearpage

\appendix

\twocolumn[
\begin{flushleft}
\begin{spacing}{1.8}
{   \Huge \sffamily
    \pcplanner: Physics-Constrained Self-Supervised Learning for Robust
    Neural Motion Planning with Shape-Aware Distance Function
    Supplementary Material
}
\end{spacing}
\end{flushleft}
]

\setcounter{equation}{18}
\section{Factorized Eikonal Equation}

To make it applicable for motion planning tasks, we follow the factorization in NTFields \cite{ni2023ntfields} for $T(\cstart,\cgoal)$:
\begin{equation}
    T(\cstart,\cgoal) = \frac{\left \|\cstart-\cgoal  \right \|}{\tau(\cstart,\cgoal)} \label{T_define},
\end{equation}
where $\tau(\cstart,\cgoal)$ is a factorized time field. The advantage here is that $\tau$ can effectively adjust the $T(\cstart,\cgoal)\in \left [ 0,\infty  \right ] $ within a constrained range~(specifically from 0 to 1) and avoid the singularity issue.

We then employ the MLP to model the underlying physics of $\tau$. The neural network, parameterized by $\Theta$, takes the robot's start and goal configuration $(\cstart, \cgoal)$ as input and outputs the factorized time field $\tau_{\Theta}$:
\begin{equation}
    \label{eq:tau}
    \tau_{\Theta} =\text{MLP}(\cstart,\cgoal;\Theta).
\end{equation}
Subsequently, we can obtain the predicted time field $T_{\Theta}$ through \cref{eq:tau}.

\section{Implementation Details}

\subsection{Physics-Constrained Time Field}

\paragraph{Training strategy.}
The speed loss is applied throughout the entire training process, while the physical constraint loss is introduced after a certain number of epochs (specifically, 50 epochs). This delay is necessary because the time fields do not perform well in the initial training stages, and the waypoints derived at this stage will mislead the network's training. It is important to note that our monotonic constraint requires a relatively optimal waypoint to guide training effectively, and a poor-quality waypoint will disturb training of the time field.

\paragraph{Regressor Architecture.}
The network architecture of the regressor for the time field follows NTFields. It contains the c-space encoder, a non-linear symmetric operator, and the time field generator.
The c-space encoder, denoted as $g(\cdot)$, is comprised of fully connected~(FC) layers with ELU activation and several ResNet-style\cite{he2016deep} MLP with ELU, which takes the robot's configuration $\mathbf{c}$ as input and generate the embedding $g(\mathbf{c})$.
Given the start and goal configuration $\cstart$ and $\cgoal$, the non-linear symmetric operator is represented as $[\max(f(\cstart), f(\cgoal)), \min(f(\cstart), f(\cgoal))]$ where $[\cdot]$ denotes a concatenation operator. This operator ensures the output of the time field is symmetric with respect to the start and goal configuration.
The time field generator takes the concatenated embedding as input and outputs the time field value. It is composed of several FC + ELU layers and ResNet MLP + ELU layers.

\paragraph{Hyperparameters.}
We use AdamW~\cite{loshchilov2017decoupled} optimizer with $2\times10e^{-4}$ learning rate and 0.1 weight decay. During training, $\lambda_m$ and $\lambda_o$ are set to 0.08 and 0.001 respectively.

\subsection{Shape-Aware Distance Field}

\paragraph{Training Setup.}
We create the training dataset from Acronym \cite{acronym2020} following the procedure of GenSDF \cite{chou2022gensdf}. Acronym is a subset of ShapeNet \cite{chang2015shapenet} dataset which consists of 8872 watertight synthetic 3D models of 262 categories. We use 147 models as our training dataset to learn our SADF. For each 3D object, we sample 1024 points on the surface as a dense input, and then downsample to 32 points to create a sparse input.
For a given environment, we sample $10^6$ configurations for each object that we viewed as our robot, and then calculate the shape-aware distance by attaining the minimum of the queried distances from the 1024 sampled surface points to the environment with BVH-distance-query~\cite{Karras2012bvh}.
This distance obtained from the dense point cloud through BVH-distance-query is regarded as the GT for SADF training.

\paragraph{Training Details.}
We fix the parameters of the pretrained shape encoder in the training process.
For each epoch, we randomly select one object from the training dataset and use the shape encoder to derive the shape feature which will be only calculated once in this epoch. Then we derive the $\mathcal{L}_d$ and $\mathcal{L}_{SADF}$ to jointly optimize the network parameters.

\paragraph{Network Architecture.}
We use the pretrained shape decoder from GenSDF~\cite{chou2022gensdf} which chooses 256 latent size and 64 hidden dimensions. The shape encoder utilizes the tri-plane feature to represent the object's geometry and uses the parallel Unet \cite{ronneberger2015u} to aggregate shape information. For the shape decoder and distance decoder, we employ the fully connected~(FC) layers with ELU activation and several ResNet-style\cite{he2016deep} MLP with ELU.

\paragraph{Hyperparameters.}
We use AdamW~\cite{loshchilov2017decoupled} optimizer with $2\times10e^{-4}$ learning rate and 0.1 weight decay. During training, $\lambda_d$ is set to 1.

\section{More experiment details}
\label{sec:exp_details}

\subsection{Experimental Setup}
\label{sec:exp_setup}
\noindent
\textbf{Evaluation Metrics.}
Our evaluation metrics include \textbf{path length}, \textbf{planning time}, \textbf{success rate~(SR)} and \textbf{challenging success rate (CSR)}. The path length quantifies the sum of configuration distances between configurations of the waypoint in different settings, while the planning time measures the time taken by a planner to seek a valid path solution. The SR represents the percentage of collision-free paths connecting the provided start and goal in the test dataset identified by a given planner.
The CSR is built upon the SR which constructs a test dataset that removes the easy case. Here the easy case implies the trajectory can be successfully planned just using simple linear interpolation between the start and goal configuration.
The quantitative results for each set of experiments are averaged from the planning outcomes.

\noindent\textbf{Baselines.} We compare our methods with the following baselines.
\begin{itemize}[left=4pt]
    \item NTFields~\cite{ni2023ntfields}: As described earlier, it directly learns to solve the Eikonal equation without relying on expert training data.
    \item P-NTFields~\cite{ni2023progressive}: A physics-informed method based on NTFields that introduces a progressive learning strategy and incorporates a viscosity term into the Eikonal equation to deal with complex scenarios.
    \item FMM~\cite{sethian1996fast}: A numerical method~\cite{sethian1996fast} that discretizes the given C-space and computes the solution to the Eikonal equation for path planning.
    \item RRT*~\cite{kingston2018sampling}: A sampling-based method that constructs optimal trees and finds a feasible path connecting the given start and goal configuration.
    \item RRT-Connect~\cite{kuffner2000rrt}: A bidirectional sampling-based planner that iteratively grows trees from both the start and goal configurations, attempting to connect them by extending the trees towards each other until a path is found.
    \item Lazy-PRM*~\cite{bohlin2000path}: A multi-query method that combines sampling with graph search, which constructs a graph by sampling the environment and connecting nodes. When given start and goal configurations, it queries the graph to find a path.
\end{itemize}

\noindent\textbf{Experimental Settings.}
Following the data preparation procedure outlined in NTFields, we generate $10^6$(3,4 DoFs) or $10^7$ (6 DoFs) training configurations for NTFields, P-NTFields, and our method. For FMM, which involves the process of discretization, we opt to choose the nearest grid cells associated with our start and goal pairs when seeking solutions. 
Moreover, we execute RRT*, RRT-Connect, and LazyPRM* on our test set until they discover a path solution with the given start and goal configuration in the specified time limit~(10 seconds for rigid robots and 5 seconds for manipulators). 
For the success rate test, we randomly chose 1000 start and goal pairs that are collision-free as our test set.
All the experiments are conducted with 3090 RTX GPU and Intel(R) Xeon(R) Gold 6139M CPU.

\subsection{Motion Planning in 2D Environments.}
\label{sec:mp_2d_env}
We conduct a benchmark of our proposed method on two 2D environments in $\mathbb{SE}(2)$ space. The first environment consists of six obstacles with fixed sizes, while the second one comprises a cluster of 15 randomly placed obstacles with variable sizes.
We compare our method with the aforementioned baselines and demonstrate the planning capability with line and triangle as our robot in 2D environments.

\begin{table}[!t]
\centering
\caption{
    Comparison on the 2D environments in $\mathbb{SE}(2)$. The optimal results are highlighted with \colorbox{colorFst}{\bf first}, \colorbox{colorSnd}{second}.
}
\footnotesize

\setlength{\tabcolsep}{4pt}

\begin{tabular}{lccccc}
\toprule[0.15em]
\multirow{2}{*}{Methods}
& \multirow{2}{*}{Metrics} & \multicolumn{2}{c}{Fixed Obstacles} & \multicolumn{2}{c}{Cluttered Obstacles} \\ \cmidrule(lr){3-4} \cmidrule(lr){5-6}
&                          & Line             & Triangle         & Line               & Triangle         \\

\midrule[\ourmidrulewidth]
\multirow{\metricsize}{*}{RRT*}
& Length                   & 0.24             & \nd 0.31         & \nd 0.17           & 0.30              \\
& Time(ms)                 & 10140.1          & 10131.2          & 10191.9            & 10185.6           \\
& SR(\%)                   & 93.4             & 96.7             & 82.9               & 83.7              \\
& CSR(\%)                  & 81.8             & 89.8             & 59.7               & 62.9 
  \\
\midrule
\multirow{\metricsize}{*}{LazyPRM*}
& Length                   & \fs 0.22         & \fs 0.28     & \fs 0.16               & 0.27              \\
& Time(ms)                 & 10143.0          & 10112.1          & 10145.8            & 10142.1           \\
& SR(\%)                   & 94.7             & 96.7             & 88.7               & 89.7              \\
& CSR(\%)                  & 85.4             & 89.8             & 73.2               & 76.3              \\
\midrule
\multirow{\metricsize}{*}{RRT-Connect}
& Length                   & 0.68             & 0.79             & 0.64               & 0.79              \\
& Time(ms)                 & 612.1            & 262.7            & 1183.5             & 957.7             \\
& SR(\%)                   & 97.7             & 97.7             & 93.2               & 93.2              \\
& CSR(\%)                  & \nd 95.3         & 95.1             & 88.9               & 88.3               \\
\midrule[\ourmidrulewidth]
\multirow{\metricsize}{*}{FMM}
& Length                   & \nd 0.23         & \nd 0.31       & \fs 0.16     & \fs 0.25               \\
& Time(ms)                 & 1290.0           & 1381.2             & 1332.1             & 1762.3            \\
& SR(\%)                   & 98.4             & 99.2               & \nd 99.4           & 99.2               \\
& CSR(\%)                  & \fs 99.7         & 97.8               &  \fs 100           &  \fs 99.3              \\

\midrule
\multirow{\metricsize}{*}{NTFields}
& Length                   & 0.25             & \nd 0.31           & \nd 0.17           & \nd 0.26              \\
& Time(ms)                 & 3.3              & 2.4                & 5.3                & 3.1               \\
& SR(\%)                   & \nd 98.3         & \nd 99.9           & 82.6               & 94.0              \\
& CSR(\%)                  & \nd 95.3         & \nd 99.7               & 60.7               & 86.2               \\
\midrule
\multirow{\metricsize}{*}{P-NTFields}
& Length                   & 0.26             & 0.31               & \nd 0.17              & 0.26              \\
& Time(ms)                 & \nd 2.5          & \nd 2.3            & 4.3                   & \fs 2.0               \\
& SR(\%)                   & 96.9             & \fs 100            & 94.9                  & 97.2              \\
& CSR(\%)                  & 92.0             & \fs 100                & 84.1                  & 93.6               \\
\midrule[\ourmidrulewidth]

\multirow{\metricsize}{*}{\begin{tabular}[c]{@{}l@{}}Ours w/o\\ adapt. planning\end{tabular}}
& Length                   & 0.24             & \nd 0.31           & \fs 0.16               & \nd 0.26              \\
& Time(ms)                 & \fs 1.8          & \fs 2.0            & \fs 1.5                & \nd 2.2               \\
& SR(\%)                   & \fs 99.9         & \fs 100            & 99.3                   & 98.9              \\
& CSR(\%)                  & \fs 99.7         & \fs100              & 98.3                    & 97.5                \\
\midrule

\multirow{\metricsize}{*}{Ours}
& Length                   & 0.24             & \nd 0.31           & \fs 0.16               & \nd 0.26              \\
& Time(ms)                 & \fs 1.8          & \fs 2.0            & \nd 2.6                & 3.7               \\
& SR(\%)                   & \fs 99.9         & \fs 100            & \fs 99.8               & \fs 99.6              \\
& CSR(\%)                  & \fs 99.7         & \fs100               & \nd 99.5              & \nd 99.1               \\

\toprule[0.15em]
\end{tabular}

\label{table:2d_env}
\end{table}

From ~\cref{table:2d_env},  we can see that our method surpasses all other approaches in terms of both SR and computation time. Additionally, our method achieves almost the best CSR with only a slight margin behind FMM in the clustered environment. Notably, neural methods, including NTFields, P-NTFields, and our proposed method, demonstrate superior computational efficiency compared to traditional motion planning methods like RRT*, LazyPRM*, RRT-Connect, and FMM. 
Specifically, our method is more than 400 times faster than FMM, the numerical method to solve the Eikonal equation.
In 2D environments, our method outperforms traditional planning methods with a remarkable speed-up of over 100 times, while maintaining a comparable path length to other methods within a small threshold.

\subsection{Motion Planning on Constraint Manifolds}
We apply our methods to geodesic distance learning, framed as a constrained motion planning~(CMP) problem, on a bunny-shaped 2D surface mesh manifold in 3D space.
Following \cite{ni2024physics}, we define the GT speed model $S^{\ast }$ for the constrained motion planning problem as follows:

\begin{equation}
    \begin{aligned}
        D_{\mathcal{M}}(\mathbf{c}) &= \min\left(D_{[r, \mathcal{X}_{mnfld}]}\left(\mathbf{c} \right), d_{max} \right), \\
        S^{\ast }(\mathbf{c}) &=\exp (-\frac{D^{2}_{\mathcal{M}}(\mathbf{c})}{\beta d^2_{max}} ),
    \end{aligned}
    \label{speedmodel_mnfld}
\end{equation}
where $D_{\mathcal{M}}(\mathbf{c})$ determines the distance from the robot to the manifolds. $D$ is a function that computes the shortest distance between the robot $r$ at configuration $\mathbf{c}\in \mathcal{C}$ and manifold $\mathcal{X}_{mnfld}$, $d_{max}$ limits the maximum distance ranges, and $\beta \in \mathbb{R}^{+}$ is a scaling factor.
For further details about CMP with time fields, please refer to \cite{ni2024physics}.
As shown in \cref{fig:geodesic}, our method successfully finds an effective path solution (in green) to determine the geodesic distance.
\begin{figure}
\centering
    \begin{subfigure}[b]{0.32\columnwidth}
         \centering
	\includegraphics[width=\linewidth]{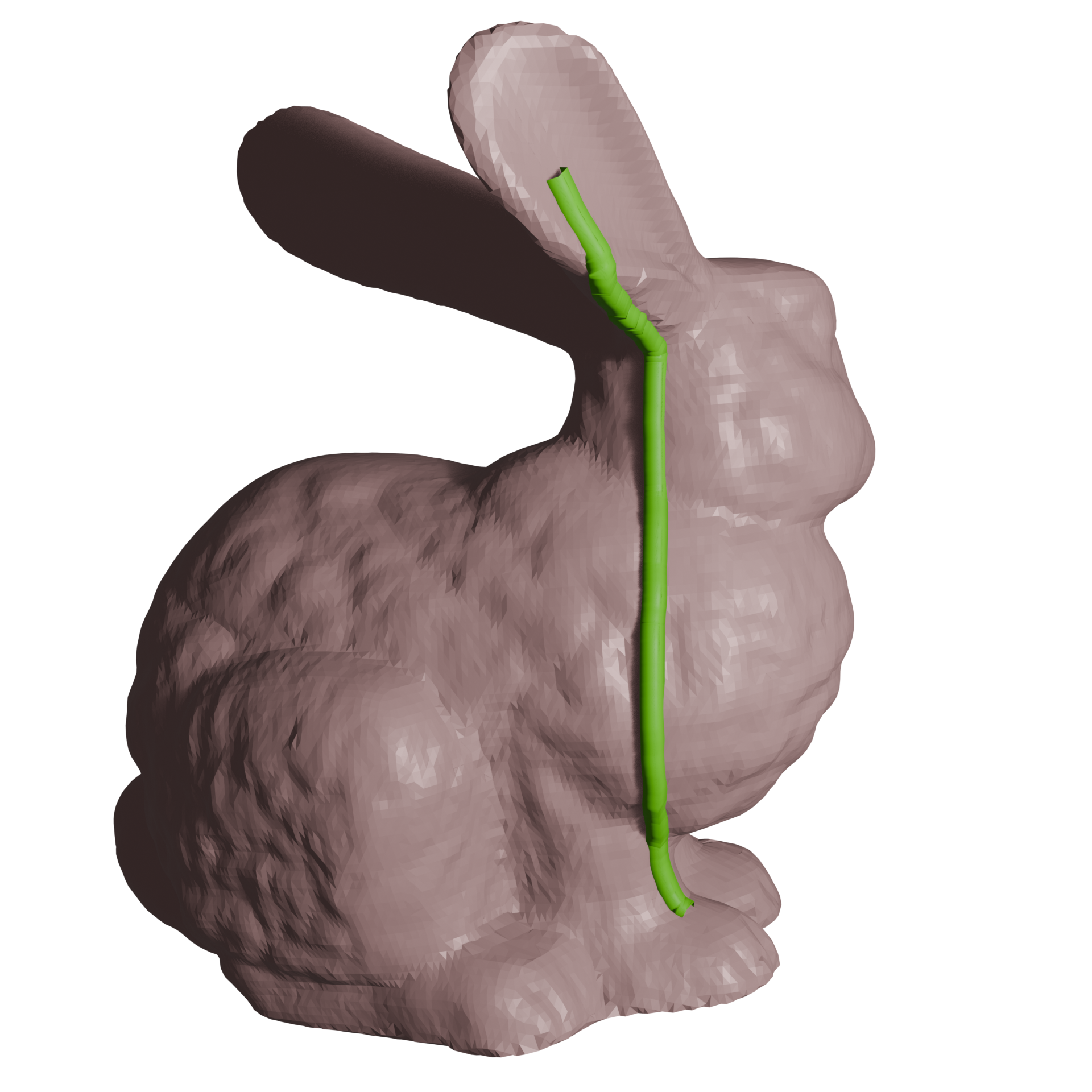}
    \end{subfigure}
    \begin{subfigure}[b]{0.32\columnwidth}
         \centering
	\includegraphics[width=\linewidth]{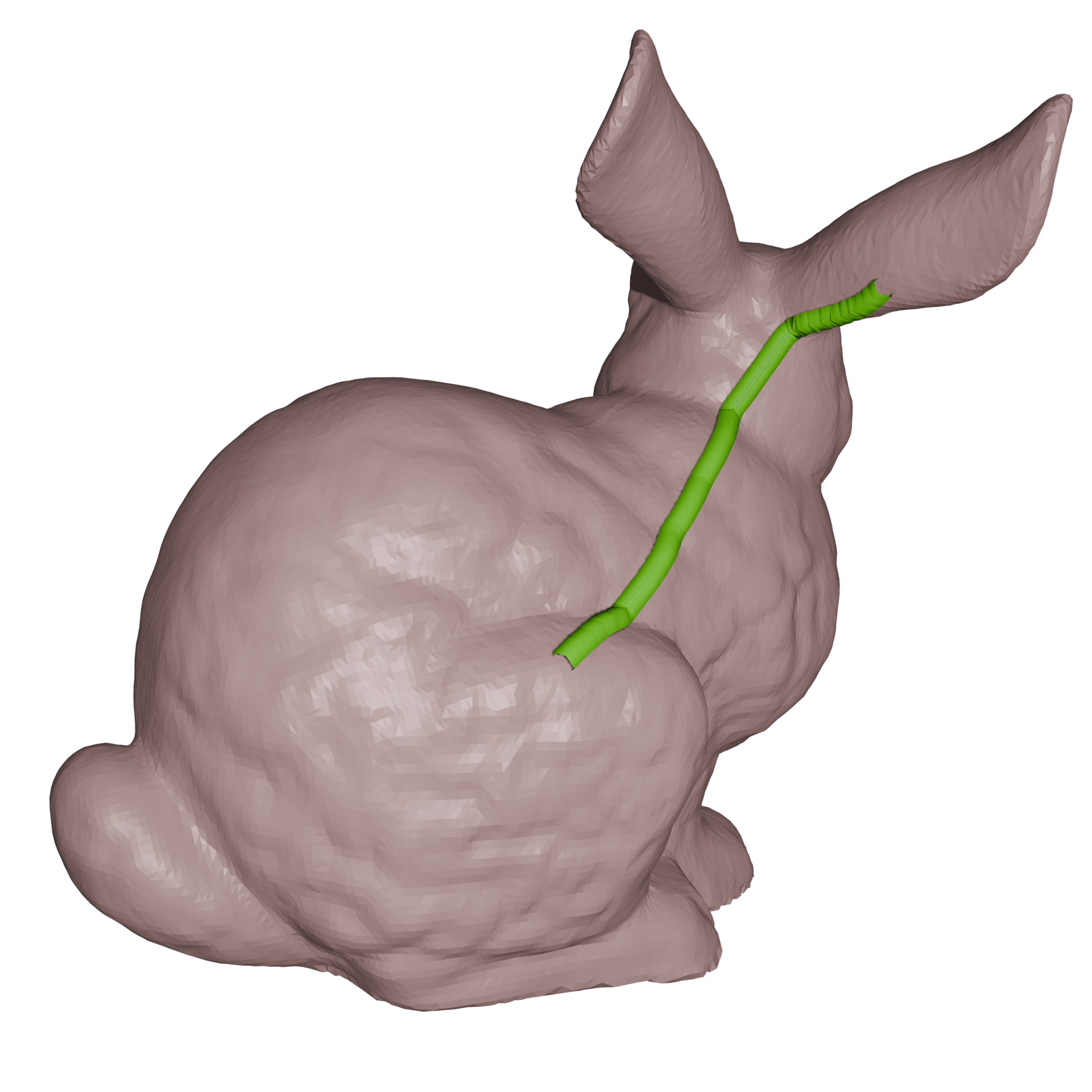} 
    \end{subfigure}
    \begin{subfigure}[b]{0.32\columnwidth}
         \centering
	\includegraphics[width=\linewidth]{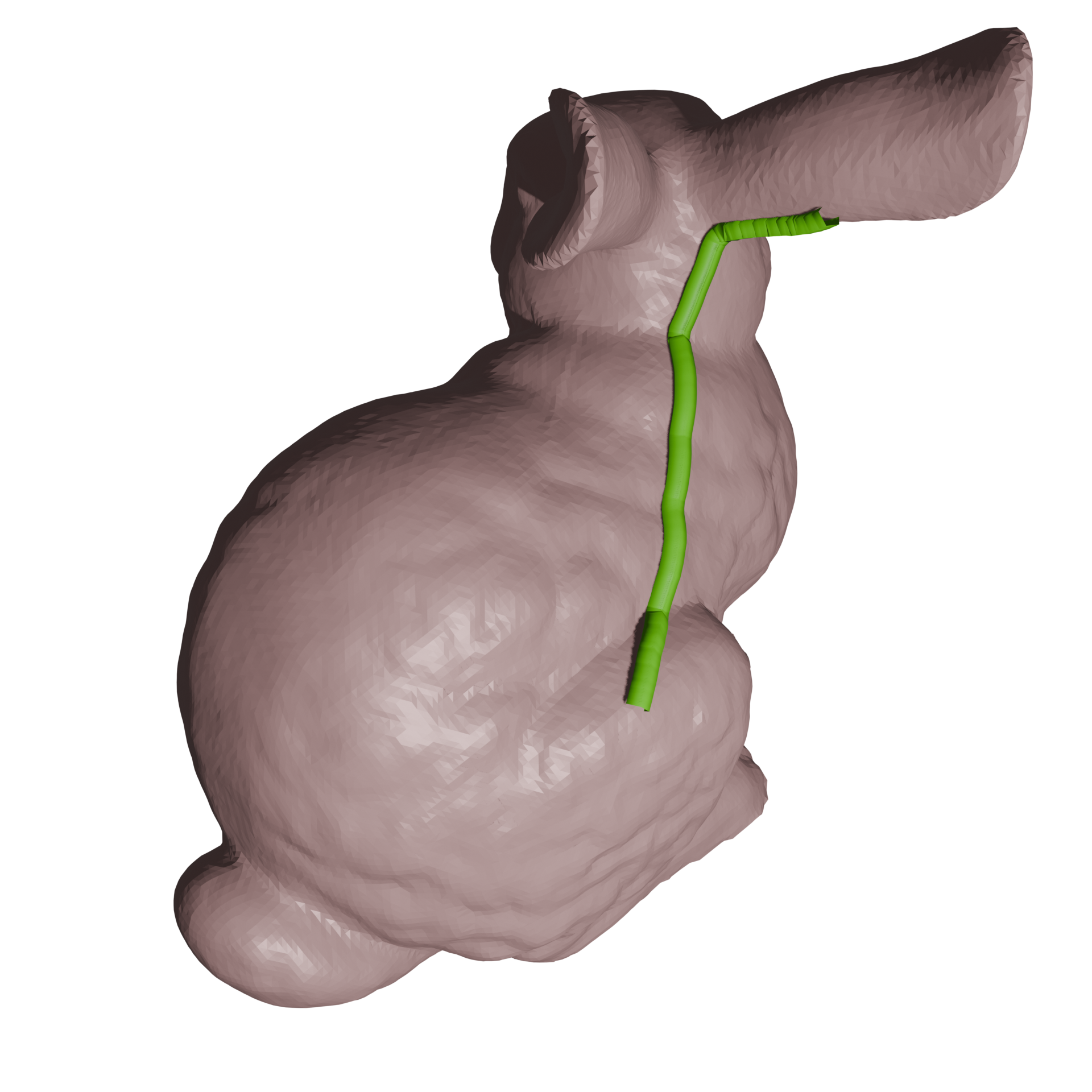} 
    \end{subfigure}
    \caption{
    	Constrained motion planning on 2D surface mesh manifold (Bunny) in 3D space, with the planned path highlighted in green.
    }
    \label{fig:geodesic}
    \Description{Constrained motion planning on Bunny.}
\end{figure}

\section{Limitation}
Our physical constraints serve as a necessary condition for ensuring the monotonicity of the time field in the viscosity solution of the Eikonal equation. However, these constraints are not sufficient to guarantee that the network will converge to the viscosity solution. Moreover, although the proposed neural shape-aware distance field (SADF) is independent of robot shapes, it is specific to the environment, requiring training for each new environment. Identifying the sufficient conditions for network convergence to the viscosity solution and developing an environment-agnostic SADF would be an intriguing direction for future research.

\end{document}